\documentclass{article}
\usepackage[table]{xcolor}

\PassOptionsToPackage{numbers,sort&compress}{natbib}
\usepackage[preprint]{neurips_2021}

\usepackage[utf8]{inputenc} %
\usepackage[T1]{fontenc}    %
\usepackage{hyperref}       %
\usepackage{url}            %
\usepackage{booktabs}       %
\usepackage{amsfonts}       %
\usepackage{nicefrac}       %
\usepackage{microtype}      %
\usepackage{xcolor}         %
\usepackage{graphicx}
\usepackage{subfigure}
\usepackage{xspace}
\usepackage{multirow}
\usepackage{glossaries}
\usepackage[normalem]{ulem}
\usepackage{wrapfig}
\usepackage{blindtext}
\usepackage[multiple]{footmisc}
\usepackage{dsfont}
\usepackage{caption}
\usepackage{amsthm}
\usepackage{xfrac}
\usepackage{listings}

\usepackage{amsmath,amsfonts,bm}

\def\eqref#1{equation~\ref{#1}}

\def\1{\bm{1}}
\newcommand{\realdata}{\mathcal{D}}
\newcommand{\generateddata}{\hat{\mathcal{D}}}
\newcommand{\train}{\mathcal{D_{\mathrm{train}}}}

\newcommand{\test}{\mathcal{D_{\mathrm{test}}}}

\def\vtheta{{\bm{\theta}}}
\def\vdelta{{\bm{\delta}}}

\def\vx{{\bm{x}}}

\DeclareMathAlphabet{\mathsfit}{\encodingdefault}{\sfdefault}{m}{sl}
\SetMathAlphabet{\mathsfit}{bold}{\encodingdefault}{\sfdefault}{bx}{n}

\def\gA{{\mathcal{A}}}

\def\gD{{\mathcal{D}}}

\def\gS{{\mathcal{S}}}

\def\gV{{\mathcal{V}}}
\def\gW{{\mathcal{W}}}
\def\gX{{\mathcal{X}}}
\def\gY{{\mathcal{Y}}}

\def\sA{{\mathbb{A}}}

\newcommand{\E}{\mathbb{E}}

\newcommand{\R}{\mathbb{R}}

\DeclareMathOperator*{\argmin}{arg\,min}

\DeclareMathOperator*{\maximize}{max}

\usepackage{algorithm}
\usepackage{algorithmicx}
\usepackage{algpseudocode}

\newacronym{pgd}{PGD}{Projected Gradient Descent}
\newacronym{bim}{BIM}{Basic Iterative Method}
\newacronym{fgsm}{FGSM}{Fast Gradient Sign Method}
\newacronym{wrn}{\textsc{Wrn}}{Wide ResNet}
\newacronym{sgd}{SGD}{Stochastic Gradient Descent}
\newacronym{ddpm}{DDPM}{Denoising Diffusion Probabilistic Model}
\newacronym{vdvae}{VDVAE}{Very Deep Variational Auto-Encoder}
\newacronym{fid}{FID}{Frechet Inception Distance}
\newacronym{is}{IS}{Inception Score}
\newacronym{gan}{GAN}{Generative Adversarial Network}
\newacronym{vae}{VAE}{Variational AutoEncoder}

\newcommand{\cifar}{\textsc{Cifar-10}\xspace}

\newcommand{\cifarh}{\textsc{Cifar-100}\xspace}
\newcommand{\tinyimages}{\textsc{80M-Ti}\xspace}
\newcommand{\imagenet}{\textsc{ImageNet}\xspace}
\newcommand{\tinyimagenet}{\textsc{TinyImageNet}\xspace}

\newcommand{\svhn}{\textsc{Svhn}\xspace}
\newcommand{\linf}{\ensuremath{\ell_\infty}\xspace}
\newcommand{\ltwo}{\ensuremath{\ell_2}\xspace}
\newcommand{\lp}{\ensuremath{\ell_p}\xspace}
\newcommand{\autoattack}{\textsc{AutoAttack}\xspace}
\newcommand{\autopgd}{\textsc{AutoPgd}\xspace}
\newcommand{\multitargeted}{\textsc{MultiTargeted}\xspace}

\newcommand{\pgd}[1]{\textsc{Pgd}\textsuperscript{$#1$}\xspace}
\newcommand{\xent}{l_{\textrm{ce}}}
\newcommand{\wrn}{\gls*{wrn}\xspace}

\def\fnr{{f_\textrm{NR}}}  %
\def\fs{{f^\star}}  %

\definecolor{TartOrange}{HTML}{ff2e35}
\definecolor{Orange}{HTML}{ff7825}
\definecolor{Mango}{HTML}{ffc013}
\definecolor{AppleGreen}{HTML}{7cb81b}
\definecolor{Blue}{HTML}{1173b0}
\definecolor{BdazzledBlue}{HTML}{2e58a5}
\definecolor{Purple}{HTML}{5b3590}
\definecolor{Sunglow}{HTML}{FFCA3A}

\definecolor{codegreen}{rgb}{0,0.6,0}
\definecolor{codegray}{rgb}{0.5,0.5,0.5}
\definecolor{codepurple}{rgb}{0.58,0,0.82}
\definecolor{backcolour}{rgb}{0.95,0.95,0.92}
\lstdefinestyle{pythonstyle}{
    backgroundcolor=\color{backcolour},
    commentstyle=\color{codegreen},
    keywordstyle=\color{magenta},
    numberstyle=\tiny\color{codegray},
    stringstyle=\color{codegreen},
    basicstyle=\ttfamily\tiny,
    breakatwhitespace=false,         
    breaklines=true,
    captionpos=b,
    keepspaces=true,
    numbers=left,
    numbersep=5pt,
    showspaces=false,
    showstringspaces=false,
    showtabs=false,
    tabsize=2
}
\lstset{style=pythonstyle}

\definecolor{header}{gray}{0.9}
\definecolor{subheader}{rgb}{0.63, 0.79, 0.95}
\newcommand{\Tstrut}{\rule{0pt}{2.6ex}}
\newcommand{\Bstrut}{\rule[-0.9ex]{0pt}{0pt}}
\newcommand{\TBstrut}{\Tstrut\Bstrut}

\newcommand{\squishlist}{
   \begin{list}{$\bullet$}
    { \setlength{\itemsep}{0pt}      \setlength{\parsep}{3pt}
      \setlength{\topsep}{3pt}       \setlength{\partopsep}{0pt}
      \setlength{\leftmargin}{1.5em} \setlength{\labelwidth}{1em}
      \setlength{\labelsep}{0.5em} } }

\newcommand{\squishend}{
    \end{list}  }
    
\newtheorem{condition}{Condition}
\newtheorem{proposition}{Proposition}

\usepackage{tikz}
\usetikzlibrary{backgrounds}
\makeatletter
\tikzset{%
  fancy quotes/.style={
    text width=\fq@width pt,
    align=justify,
    inner sep=1em,
    anchor=north west,
    minimum width=\linewidth,
  },
  fancy quotes width/.initial={.8\linewidth},
  fancy quotes marks/.style={
    scale=8,
    text=white,
    inner sep=0pt,
  },
  fancy quotes opening/.style={
    fancy quotes marks,
  },
  fancy quotes closing/.style={
    fancy quotes marks,
  },
  fancy quotes background/.style={
    show background rectangle,
    inner frame xsep=0pt,
    background rectangle/.style={
      fill=gray!25,
      rounded corners,
    },
  }
}

\makeatother

\title{Improving Robustness using Generated Data}

\author{Sven Gowal*, Sylvestre-Alvise Rebuffi*, Olivia Wiles, \\
\textbf{Florian Stimberg}, \textbf{Dan Calian} {\normalfont and} {\bf Timothy Mann} \\
DeepMind, London\\
\texttt{\{sgowal,sylvestre\}@deepmind.com}
}

\begin{document}

\maketitle

\begin{abstract}
Recent work argues that robust training requires substantially larger datasets than those required for standard classification.
On \cifar and \cifarh, this translates into a sizable robust-accuracy gap between models trained solely on data from the original training set and those trained with additional data extracted from the ``80 Million Tiny Images'' dataset (\tinyimages).
In this paper, we explore how generative models trained solely on the original training set can be leveraged to artificially increase the size of the original training set and improve adversarial robustness to \lp norm-bounded perturbations.
We identify the sufficient conditions under which incorporating additional generated data can improve robustness, and demonstrate that it is possible to significantly reduce the robust-accuracy gap to models trained with additional real data.
Surprisingly, we show that even the addition of non-realistic random data (generated by Gaussian sampling) can improve robustness.
We evaluate our approach on \cifar, \cifarh, \svhn and \tinyimagenet against \linf and \ltwo norm-bounded perturbations of size $\epsilon = 8/255$ and $\epsilon = 128/255$, respectively.
We show large absolute improvements in robust accuracy compared to previous state-of-the-art methods.
Against \linf norm-bounded perturbations of size $\epsilon = 8/255$, our models achieve 66.10\% and 33.49\% robust accuracy on \cifar and \cifarh, respectively (improving upon the state-of-the-art by +8.96\% and +3.29\%).
Against \ltwo norm-bounded perturbations of size $\epsilon = 128/255$, our model achieves 78.31\% on \cifar (+3.81\%).
These results beat most prior works that use external data.
\end{abstract}

\section{Introduction}

Neural networks are being deployed in a wide variety of applications ranging from ranking content on the web~\citep{covington_deep_2016} to autonomous driving~\citep{bojarski_end_2016} via medical diagnostics~\citep{fauw_clinically_2018}.
It has become increasingly important to ensure that deployed models are robust and generalize to various input perturbations.
Unfortunately, the addition of imperceptible adversarial perturbations can cause neural networks to make incorrect predictions \citep{carlini_adversarial_2017,carlini_towards_2017,goodfellow_explaining_2014,kurakin_adversarial_2016,szegedy_intriguing_2013}.
There has been a lot of work on understanding and generating adversarial perturbations \citep{szegedy_intriguing_2013,biggio2013evasion,carlini_towards_2017,athalye_synthesizing_2017}, and on building defenses that are robust to such perturbations \citep{goodfellow_explaining_2014,madry_towards_2017,zhang_theoretically_2019,rice_overfitting_2020}.
We note that while robustness and invariance to input perturbations is crucial to the deployment of machine learning models in various applications, it can also have broader negative impacts to society such as hindering privacy~\cite{song2019privacy} or increasing bias~\cite{tramer_fundamental_2020}.

\begin{figure}[t]
\centering
\begin{minipage}{.52\textwidth}
  \centering
  \includegraphics[width=\textwidth]{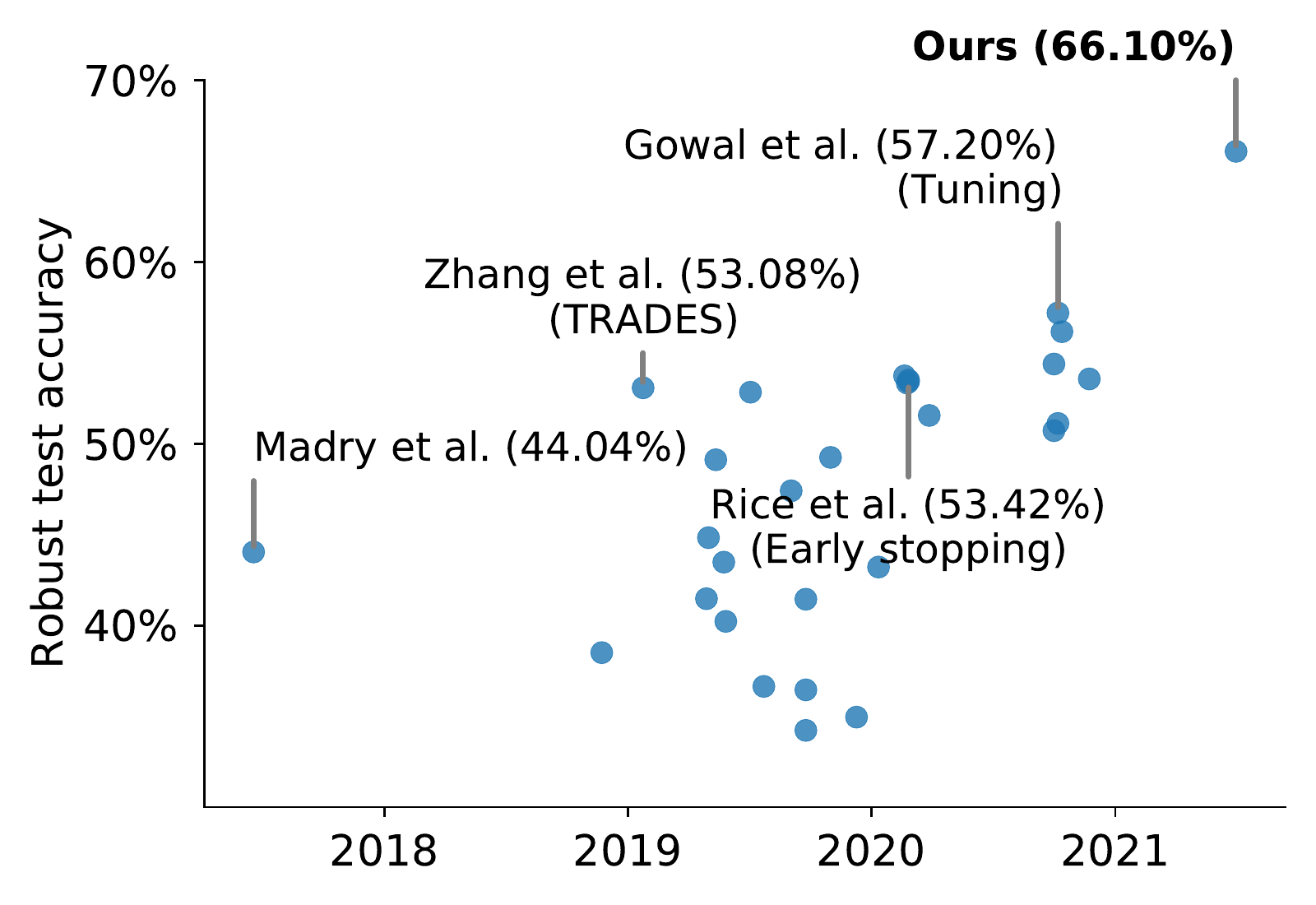}
  \captionof{figure}{Robust accuracy of models against \autoattack~\citep{croce_reliable_2020} on \cifar with \linf perturbations of size $8/255$ displayed in publication order.
Our method explores how generated data can be used to improve robust accuracy by +8.96\% without using any additional external data.
This constitutes the largest jump in robust accuracy in this setting. Our best model reaches a robust accuracy of 66.10\% against \textsc{AA+MT}~\citep{gowal_uncovering_2020}. \label{fig:history}}
\end{minipage}\hfill
\begin{minipage}{.45\textwidth}
  \centering
  \includegraphics[width=.65\textwidth]{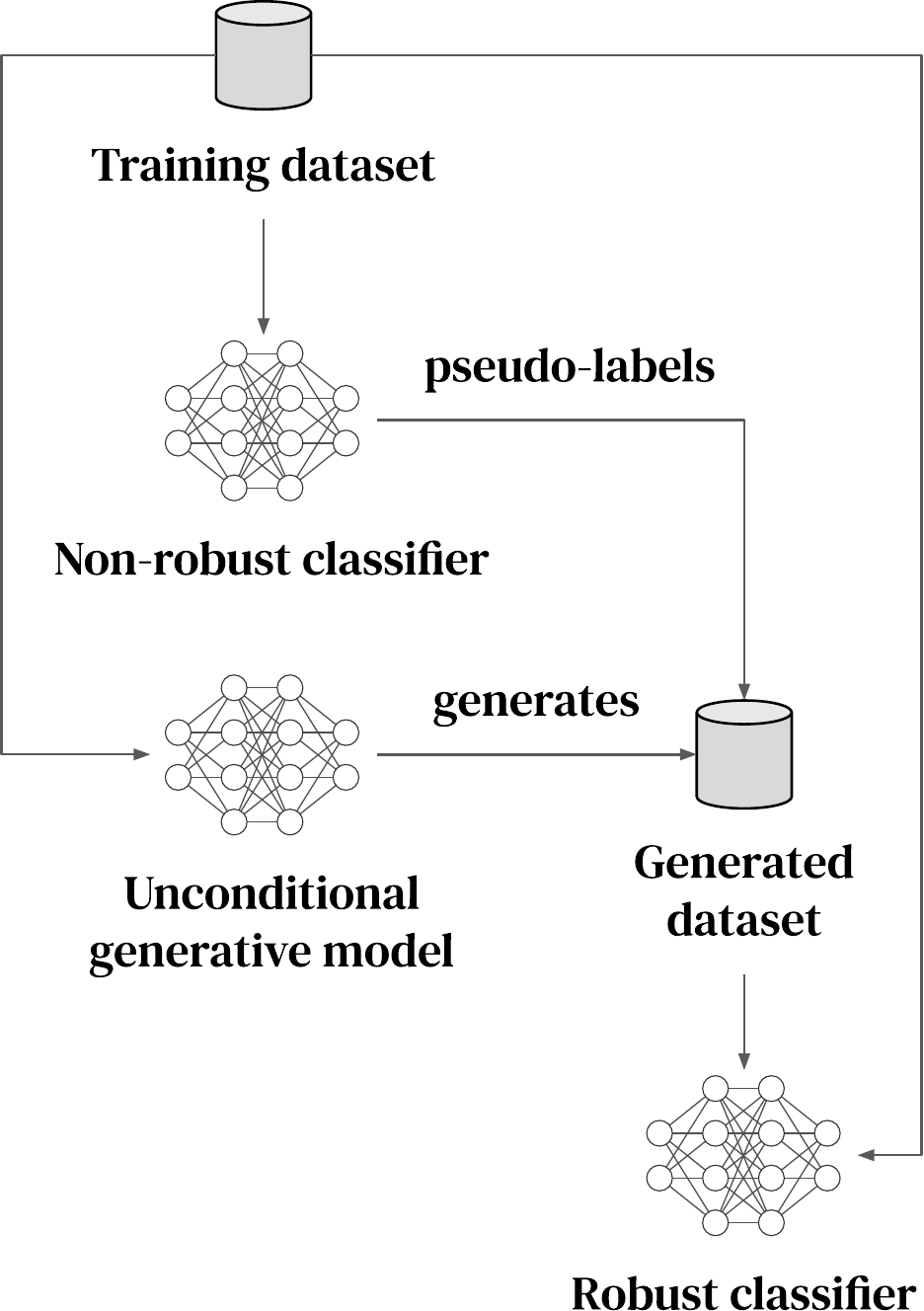}
  \captionof{figure}{Overview of our approach. Our method initially trains a generative model and a non-robust classifier. The non-robust classifier is used to provide pseudo-labels to the generated data. Finally, generated and original training data are combined to train a robust classifier. \label{fig:method}}
\end{minipage}
\end{figure}

The adversarial training procedure proposed by \citet{madry_towards_2017} feeds adversarially perturbed examples back into the training data.
It is widely regarded as one of the most successful method to train robust deep neural networks~\citep{gowal_uncovering_2020}, and it has been augmented in different ways -- with changes in the attack procedure~\citep{dong_boosting_2017}, loss function~\citep{mosbach_logit_2018,zhang_theoretically_2019} or model architecture~\citep{xie_feature_2018,zoran_towards_2019}.
We highlight the works by \citet{carmon_unlabeled_2019,uesato_are_2019,najafi_robustness_2019,zhai_adversarially_2019} who simultaneously proposed the use of additional unlabeled external data.
While the addition of external data helped boost robust accuracy by a large margin, progress in the setting without additional data has slowed (see \autoref{fig:history}).
On \cifar~\cite{krizhevsky_cifar-10_2014} against \linf perturbations of size $\epsilon = 8/255$, the best known model obtains a robust accuracy of 65.87\% when using additional data. The same model obtains a robust accuracy of 57.14\% without this data \citep{gowal_uncovering_2020}.
As a result, \uline{we ask ourselves whether it is possible to leverage the information contained in the original training set to a greater extent}.
This manuscript challenges the status-quo.
To the contrary of standard training where it is widely believed that generative models lack diversity and that the samples they produce cannot be used to train better classifiers \citep{ravuri_classification_2019}, we demonstrate both theoretically and experimentally that these generated samples can be used to improve robustness (using the approach described in \autoref{fig:method} and \autoref{sec:method}).
We make the following contributions:
\squishlist
\item We demonstrate in \autoref{sec:motivation} that it is possible to use low-quality random inputs (sampled from a conditional Gaussian fit of the training data) to improve robust accuracy on \cifar against \linf perturbations of size $\epsilon = 8/255$ (+0.93\% on a \textsc{Wrn}-28-10) and provide a justification and sufficient conditions in \autoref{sec:random_is_enough}.
\item We leverage higher quality generated inputs (i.e., inputs generated by \uline{generative models solely trained on the original data}), and study four recent generative models: the \gls{ddpm} \citep{ho2020denoising}, StyleGAN2~\citep{karras2020analyzing}, BigGAN~\citep{brock2018large} and the \gls{vdvae}~\citep{child2021vdvae} (\autoref{sec:generative_models}). We show that DDPM samples cover most closely the real data distribution (as measured by the distance to the test set in the Inception feature space).
\item Using images generated by the \gls{ddpm} allows us to reach a robust accuracy of 66.10\% on \cifar against \linf perturbations of size $\epsilon = 8/255$ (an improvement of +8.96\% upon the state-of-the-art). Notably, our best \cifar models beat all techniques that use additional data (see \autoref{sec:results}) and constitutes one of the largest improvements ever made in the setting without additional data. As a consequence, we demonstrate that it is possible to avoid the use of \tinyimages~\citep{80m} which has been withdrawn due to presence of offensive images.\footnote{\url{https://groups.csail.mit.edu/vision/TinyImages/}}
\squishend

\section{Related work}

\paragraph{Adversarial \lp norm-bounded attacks.}

Since \citet{szegedy_intriguing_2013,biggio2013evasion} observed that neural networks which achieve high accuracy are highly vulnerable to adversarial examples, the art of crafting increasingly sophisticated adversarial examples has received a lot of attention.
\citet{goodfellow_explaining_2014} proposed the \gls{fgsm} which generates adversarial examples with a single normalized gradient step.
It was followed by R+\gls{fgsm} \citep{tramer_ensemble_2017}, which adds a randomization step, and the \gls{bim} \citep{kurakin_adversarial_2016}, which takes multiple smaller gradient steps.

\paragraph{Adversarial training as a defense.}

Adversarial training \citep{madry_towards_2017} is widely regarded as one of the most successful methods to train deep neural networks robust to such attacks.
It has received significant attention and various modifications have emerged \citep{dong_boosting_2017,mosbach_logit_2018,xie_feature_2018}.
A notable work is TRADES \citep{zhang_theoretically_2019}, which balances the trade-off between standard and robust accuracy, and achieved state-of-the-art performance against \linf norm-bounded perturbations on \cifar.
More recently, the work from \citet{rice_overfitting_2020} studied \emph{robust overfitting} and demonstrated that improvements similar to TRADES could be obtained more easily using classical adversarial training with early stopping.
Finally, \citet{gowal_uncovering_2020} highlighted how different hyper-parameters (such as network size and model weight averaging) affect robustness.

\paragraph{Data-driven augmentations.}

Works, such as \emph{AutoAugment} \citep{cubuk_autoaugment:_2018} and related \emph{RandAugment}~\citep{cubuk2019randaugment}, learn augmentation policies directly from data.
These methods are tuned to improve standard classification accuracy and have been shown to work well on multiple datasets.
\emph{DeepAugment}~\citep{hendrycks_many_2020} explores how perturbations of the parameters of pre-trained image-to-image models can be used to generate augmented datasets that provide increased robustness to common corruptions \citep{hendrycks2018benchmarking}.
Similarly, generative models can be used to create novel views of images~\citep{plumerault2019controlling,jahanian2019steerability,harkonen_ganspace_2020} by manipulating them in latent space.
When optimized and used during training, these novel views reduce the impact of spurious correlations and improve accuracy~\citep{gowal_achieving_2019,wong2020learning}.
Most recently, \citet{laidlaw2020perceptual} proposed an adversarial training method based on bounding a neural perceptual distance (i.e., an approximation of the true perceptual distance).
While these works make significant contributions towards improving generalization and robustness to semantic perturbations, they do not improve robustness to \lp norm-bounded perturbations.

\paragraph{Robustness to \lp norm-bounded perturbations using generative modeling.}

Finally, we highlight works, such as \emph{Defense-GAN} \citep{samangouei2018defensegan} or \emph{ME-Net} \citep{yang2019menet}, which leverage data modeling techniques to create stronger defenses against \lp norm-bounded attacks.
Unfortunately, these techniques are not as robust as they seem and are broken by adaptive attacks~\citep{athalye_obfuscated_2018,tramer_adaptive_2020,croce_reliable_2020}.
Overall, to the best of our knowledge, there is little \citep{madaan2020learning} to no evidence that data augmentations or generative models can be used to improve robustness to \lp norm-bounded attacks.
In fact, generative models mostly lack diversity and it is widely believed that the samples they produce cannot be used to train classifiers to the same accuracy than those trained on original datasets \citep{ravuri_classification_2019}.
We differentiate ourselves from earlier works by leveraging additional generated samples for training rather than modifying the defense procedure, and by establishing sufficient conditions under which such samples improve robustness.

\section{Adversarial training using generated data}

The rest of this manuscript is organized as follows.
In this section, we provide an overview of adversarial training, demonstrate using a motivational example that low-quality generated data can be leveraged to improve robustness to adversarial examples, and describe our method.
In \autoref{sec:random_is_enough}, we detail sufficient conditions that explain why generated samples can improve robustness and explore the limitations of our approach.
In \autoref{sec:generative_models}, we analyze four complementary and recent generative models in the context of our method.
Finally, we provide experimental results in \autoref{sec:experiments}.

\subsection{Adversarial training}

For classification tasks, \citet{madry_towards_2017} propose to find model parameters $\vtheta$ that minimize the adversarial risk:
\begin{equation}
\argmin_\vtheta \E_{(\vx,y) \sim \realdata} \left( \maximize_{\vdelta \in \gS} \left [ f(\vx + \vdelta; \vtheta) \neq y \right] \right)
\label{eq:adversarial_risk}
\end{equation}
where $\realdata$ is a data distribution over pairs of examples $\vx$ and corresponding labels $y$, $f(\cdot; \vtheta)$ is a model parametrized by $\vtheta$, $[\cdot]$ is the Iverson bracket notation and corresponds to the $0-1$ loss, and $\gS$ defines the set of allowed perturbations.
For \lp norm-bounded perturbations of size $\epsilon$, the perturbation set is defined as $\gS_p = \{ \vdelta ~|~ \| \vdelta \|_p \leq \epsilon \}$.
Hence, for \linf norm-bounded perturbations $\gS = \gS_\infty$ and for \ltwo norm-bounded perturbations $\gS = \gS_2$.
In the rest of this manuscript, we use $\epsilon_p$ to denote $\ell_p$ norm-bounded perturbations of size $\epsilon$ (e.g., $\epsilon_\infty = 8/255$).

In practice, given a training set $\train$, the adversarial training procedure replaces the $0-1$ loss with the cross-entropy loss $\xent$ and is formulated as
\begin{equation}
\argmin_\vtheta \E_{(\vx,y) \in \train} \left( \maximize_{\vdelta \in \gS} \xent(f(\vx + \vdelta; \vtheta), y) \right).
\label{eq:at}
\end{equation}

\subsection{Generated data can improve robust generalization}
\label{sec:motivation}

Data augmentations can reduce the generalization error of standard (non-robust) training~\citep{devries2017improved,zhang2017mixup,cubuk_autoaugment:_2018,cubuk2019randaugment}.
However, to the contrary of standard training, augmentations beyond random flips and crops \citep{he2015deep} -- such as \emph{Cutout} \citep{devries2017improved}, \emph{mixup} \citep{zhang2017mixup}, \emph{AutoAugment} \citep{cubuk_autoaugment:_2018} or \emph{RandAugment} \citep{cubuk2019randaugment} -- have been unsuccessful in the context of adversarial training \citep{rice_overfitting_2020,gowal_uncovering_2020,wu2020adversarial}.
The gap in robust accuracy between models trained with and without additional data suggests that common augmentation techniques, which tend to produce augmented views that are close to the original image they augment, are intrinsically limited in their ability to improve robust generalization.
In other words, augmented samples are diverse (if the training set is diverse), but not complementary to the training set.
This phenomenon is particularly exacerbated when training adversarially robust models which are known to require an amount of data polynomial in the number of input dimensions~\citep{schmidt_adversarially_2018}.

\begin{figure*}[t]
\centering
\subfigure[\label{fig:motivation}]{\includegraphics[width=0.45\textwidth]{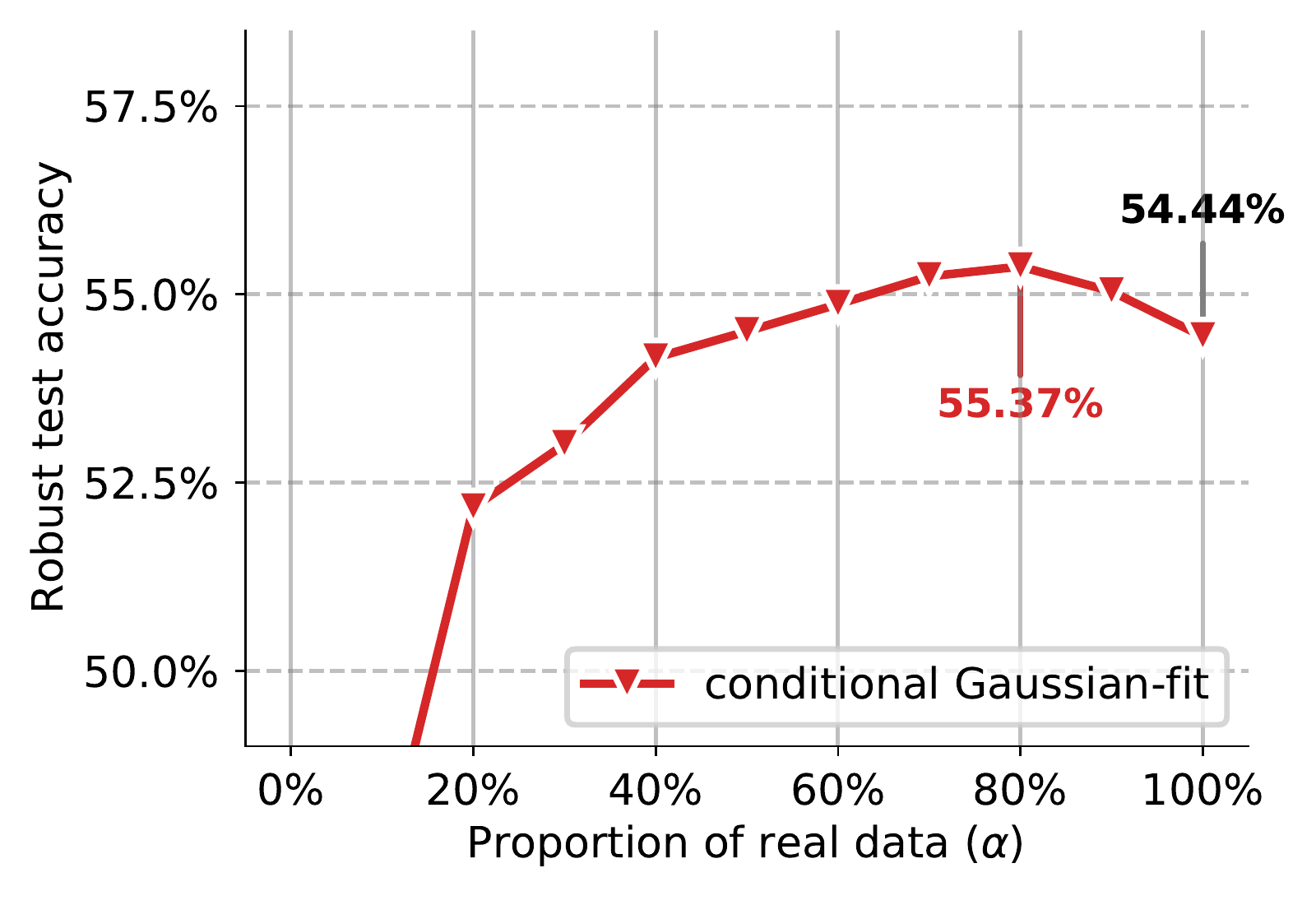}} \hspace{1cm}
\subfigure[\label{fig:gaussian_samples}]{\includegraphics[width=0.3\textwidth]{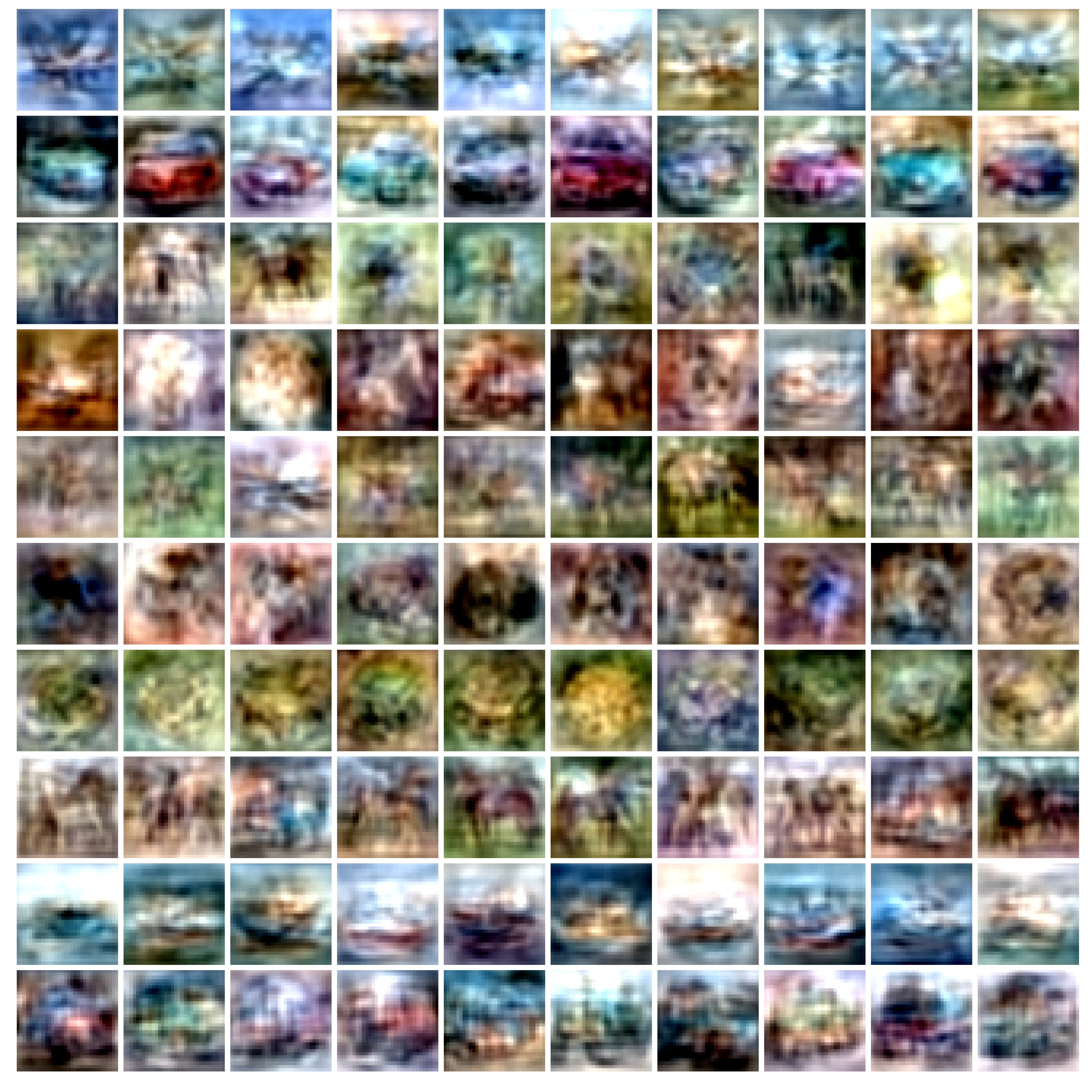}}
\caption{Low-quality random inputs can improve robustness. Panel~\subref{fig:motivation} shows the robust test accuracy (against \textsc{AA+MT}~\citep{gowal_uncovering_2020}) of a \textsc{Wrn}-28-10 against $\epsilon_\infty = 8/255$ on \cifar trained with additional data randomly sampled from a class-conditional Gaussian fit of the training data. We compare how the proportion of original \cifar and generated images affects robustness (0\% means generated samples only, while 100\% means original \cifar train set only). Panel~\subref{fig:gaussian_samples} shows some of the class-conditional Gaussian samples that are used during training. \label{fig:motivation_all}}
\end{figure*}

We hypothesize that, \uline{to improve robust generalization, it is critical to use additional training samples (augmented or generated) that are diverse and that complement the original training set} (in the sense that these new samples should ideally come from the same underlying distribution as the training set but should not duplicate the training set).
To test this hypothesis, we propose to use samples generated from a simple class-conditional Gaussian fit of the training data.
By construction, such samples (shown in \autoref{fig:gaussian_samples}) are extremely blurry but diverse.
We proceed by fitting a multivariate Gaussian to each set of 5K training images corresponding to each class in \cifar.
For each class, we sample 100K images resulting in a new dataset of 1M datapoints (no further filtering is applied).
In \autoref{fig:motivation}, we show the performance of various robust models trained by decreasing the proportion of real samples present in each batch from 100\% (original data only) to 0\% (generated data only).
Decreasing this proportion reduces the importance of the original data.
We observe that all proportions between 50\% and 90\% provide improvements in robust accuracy.
Most surprisingly, the optimal proportion of 80\% provides an absolute improvement of +0.93\%, which is an improvement comparable in size to the ones provided by model weight averaging or TRADES \citep{gowal_uncovering_2020}.
As we show in \autoref{sec:random_is_enough}, the drop in robust accuracy for proportions below 50\% is expected in the capacity-limited regime.
This experiment directly motivates our method.

\subsection{Method}\label{sec:method}

Given access to a pre-trained \uline{non-robust} classifier $\fnr$ and an \uline{unconditional generative} model approximating the true data distribution $\realdata$ by a distribution $\smash{\generateddata}$, we would like to train a \uline{robust} classifier $f(\cdot; \vtheta)$ parameterized by $\vtheta$.
We propose the following optimization problem:
\begin{equation}
    \argmin_\vtheta \alpha \cdot \!\!\!\!\mathop{\E}_{(\vx,y) \in \train} \left( \maximize_{\vdelta \in \gS} \xent(f(\vx + \vdelta; \vtheta), y) \right) + (1 - \alpha) \cdot \mathop{\E}_{\vx \sim \generateddata} \left( \maximize_{\vdelta \in \gS} \xent(f(\vx + \vdelta; \vtheta), \fnr(\vx)) \right) \label{eq:method}
\end{equation}
where $\alpha$ corresponds to a mixing factor that blends examples from the training set with those that are generated.
When $\alpha$ is set to one, our method reverts back to the original adversarial training formulation in \autoref{eq:at}.
When $\alpha$ is set to zero, our method only uses generated samples with their corresponding pseudo-labels.
In practice, for efficiency, rather than generating samples on-the-fly, we pre-generate samples offline.
Hence, both the original training set $\train$ and generated set $\smash{\generateddata}$ contain a finite number of samples.
We have the advantage, however, to be able to generate significantly more samples than present in the original training set.
In \autoref{sec:additional_results}, we evaluate how varying the number of generated samples impacts adversarial robustness.

Overall, the complete method is described in \autoref{fig:method} and is composed of three steps: \textit{(i)} it starts by training the non-robust classifier and generative model on the original training set (for \cifar, that corresponds to 50K images only); \textit{(ii)} then, the generated dataset is constructed by drawing samples from the generative model and pseudo-labeling them using the non-robust classifier; \textit{(iii)} finally, the robust classifier is trained using both the original training set and the generated dataset using \autoref{eq:method}.

\section{Randomness might be enough}
\label{sec:random_is_enough}

In this section, we formalize our notation (\autoref{sec:theory_setup}), and provide three sufficient conditions that explain why generated data can improve robustness (\autoref{sec:conditions}).
In summary, \textit{(i)} the pre-trained, non-robust classifier $\fnr$ used for pseudo-labeling must be accurate, \textit{(ii)} the likelihood of sampling examples that are adversarial to this non-robust classifier must be low, and \textit{(iii)} the generative model must be able to sample images from the true data distribution with non-zero probability.

\subsection{Setup}\label{sec:theory_setup}

Given a ground-truth function $\fs$, we would like to find optimal parameters $\vtheta^\star$ for $f(\cdot; \vtheta^\star)$ that minimize the adversarial risk,
\begin{equation}
    \vtheta^\star = \argmin_\vtheta \E_{\vx \sim \realdata} \left( \maximize_{\vdelta \in \gS} \left [ f(\vx + \vdelta; \vtheta) \neq \fs(\vx) \right] \right),
    \label{eq:risk}
\end{equation}
without access to the true data distribution $\smash{\realdata}$ or the ground-truth classifier $\fs$.
As such, we replace the distribution $\smash{\realdata}$ with an approximated distribution $\smash[b]{\generateddata}$ (from a generative model) and use a pre-trained non-robust classifier $\fnr$ instead of $\fs$ (see \autoref{sec:method}).
This results in sub-optimal parameters
\begin{equation}
    \hat{\vtheta}^\star = \argmin_\vtheta \E_{\vx \sim \generateddata} \left( \maximize_{\vdelta \in \gS} \left[ f(\vx + \vdelta; \vtheta) \neq \fnr(\vx) \right] \right).
    \label{eq:approx_risk}
\end{equation}
We introduce the unknown probability measure $\mu$ corresponding to the true data distribution $\smash{\realdata}$ and defined over the set of inputs $\gA \subseteq \R^n$ (where $n$ is the input dimensionality), as well as the known probability measure $\hat{\mu}$ corresponding to the approximated distribution $\smash{\generateddata}$.
The set of relevant inputs $\gX \subseteq \gA$ (i.e., the set of \emph{realistic} images for which we would like to enforce robustness) is the support of $\mu$ such that $\mu(\gX) = 1$ and $\forall \gW \subseteq \gX, \mu(\gW) > 0$ if $\gW$ is non-empty.
We assume that each input $\vx \in \gX$ can be assigned a label $y = \fs(\vx)$ where $\fs: \gX \mapsto \gY$ is the ground-truth classifier (only valid for \emph{realistic} images) and $\gY \in \smash{2^{\mathbb{Z}}}$ is the set of labels.
Finally, given a perturbation set $\gS$, we restrict labels such that there exists no \emph{realistic} image within the perturbation set of another that has a different label; i.e., for $\vx \in \gX$, for all $\vdelta \in \{ \vdelta' \in \gS | \vx + \vdelta' \in \gX \}$ we have $\fs(\vx) = \fs(\vx + \vdelta)$.

\subsection{Limitations and sufficient conditions}
\label{sec:conditions}

To understand the limitations of our approach, it is useful to think about idealized sufficient conditions that would allow the sub-optimal parameters $\smash{\hat{\vtheta}^\star}$ to approach the performance of the optimal parameters $\smash{\vtheta^\star}$.
First, we concentrate on the capacity-limited regime and later extrapolate to the infinite-capacity, infinite-compute regime to gain more insights.
The first sufficient condition concerns the pre-trained non-robust classifier and holds for both regimes.

\begin{condition}[accurate non-robust classifier]\label{cond:accuracy}
The pre-trained non-robust classifier $\fnr: \gA \mapsto \gY$ must be accurate on all \emph{realistic} inputs $\vx \in \gX$: $\forall \vx \in \gX, \fnr(x) = \fs(x)$.
\end{condition}

Indeed, if we had access to the true distribution $\smash{\realdata}$, \autoref{eq:risk} and \autoref{eq:approx_risk} could be made equal by setting $\fs(x) = \fnr(x)$.
\footnote{
In \autoref{sec:experiments}, we show that sub-optimal parameters $\hat{\vtheta}^\star$ that improve upon those obtained by \autoref{eq:at} can be obtained even when the non-robust classifier $\fnr$ is not perfect.
In our experiments, we use a classifier that achieves 95.68\% on the \cifar test set.
}
In the capacity-limited regime, when \autoref{cond:accuracy} is satisfied, the problem reduces to a robust generalization problem.
This problem is widely studied~\cite{cullina_pac_2018,tu2019theoretical,bhagoji2019lower} and one can show that the adversarial risk is bounded by the Wasserstein distance between the training distribution $\smash{\generateddata}$ and true data distribution $\smash{\realdata}$ (under mild assumptions)~\cite{lee2018minimax}.
In other words, as $\smash{\generateddata}$ approaches $\smash{\realdata}$, we expect the robust accuracy of $f(\cdot; \smash{\hat{\vtheta}^\star})$ to approach the one of $f(\cdot; \smash{\vtheta^\star})$.
This intuitively leads to the second sufficient condition and \autoref{prop:limited}.

\begin{condition}[accurate approximated distribution]\label{cond:gap}
The approximated data distribution $\generateddata$ and true data distribution $\realdata$ must be equivalent: $\mu(\gW) = \hat{\mu}(\gW)$ for all measurable subset $\gW \subseteq \gX$.
\end{condition}

\begin{proposition}[capacity-limited regime]\label{prop:limited}
\autoref{cond:accuracy} and \autoref{cond:gap} are sufficient conditions that allow the sub-optimal parameters $\smash{\hat{\vtheta}^\star}$ to match the performance of the optimal parameters $\smash{\vtheta^\star}$.
\end{proposition}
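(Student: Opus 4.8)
The plan is to show that, under \autoref{cond:accuracy} and \autoref{cond:gap}, the objective minimized to obtain $\hat\vtheta^\star$ in \autoref{eq:approx_risk} is \emph{pointwise identical} (as a function of $\vtheta$) to the objective minimized to obtain $\vtheta^\star$ in \autoref{eq:risk}. Once this is established, the two optimization problems have the same set of minimizers and the same optimal value, so any $\hat\vtheta^\star$ attains exactly the optimal adversarial risk achieved by $\vtheta^\star$. Fix an arbitrary $\vtheta$ and write $g_\vtheta(\vx) = \maximize_{\vdelta \in \gS}\left[ f(\vx + \vdelta; \vtheta) \neq \fs(\vx) \right]$ and $\hat g_\vtheta(\vx) = \maximize_{\vdelta \in \gS}\left[ f(\vx + \vdelta; \vtheta) \neq \fnr(\vx) \right]$ for the two integrands; both are bounded ($\{0,1\}$-valued), and we take them to be measurable as a mild regularity assumption on $f$.

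The first step is to note that \autoref{cond:gap}, applied to the measurable set $\gW = \gX$, gives $\hat\mu(\gX) = \mu(\gX) = 1$; hence $\generateddata$ is concentrated on the realistic inputs $\gX$, so the outer expectation in \autoref{eq:approx_risk} effectively integrates over $\vx \in \gX$ only. On $\gX$, \autoref{cond:accuracy} gives $\fnr(\vx) = \fs(\vx)$, so $\hat g_\vtheta = g_\vtheta$ holds $\hat\mu$-almost everywhere, and therefore $\E_{\vx \sim \generateddata}\, \hat g_\vtheta(\vx) = \int_\gX g_\vtheta \, d\hat\mu$. The second step uses \autoref{cond:gap} again: since $\mu$ and $\hat\mu$ coincide on every measurable subset of $\gX$ and both are concentrated on $\gX$, we get $\int_\gX g_\vtheta \, d\hat\mu = \int_\gX g_\vtheta \, d\mu = \E_{\vx \sim \realdata}\, g_\vtheta(\vx)$ (immediate for an indicator integrand, or via approximation by simple functions in general). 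Combining the two steps, the objective of \autoref{eq:approx_risk} equals that of \autoref{eq:risk} at every $\vtheta$, which is the desired claim; taking $\argmin$ on both sides finishes the proof.

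I expect the argument to be essentially a sequence of substitutions rather than an analytic estimate, so the ``hard part'' is mainly stating the conclusion carefully: because neither $\argmin$ need be unique, the precise statement is that the minimizing \emph{sets} coincide — equivalently, that the minimal true adversarial risk is unchanged — rather than that a particular $\hat\vtheta^\star$ equals a particular $\vtheta^\star$. The one place where the order of steps matters is that one must first establish $\hat\mu(\gX) = 1$ before restricting the expectation to $\gX$ and invoking the equivalence of measures, since \autoref{cond:gap} is stated only for subsets of $\gX$. I would also note in passing that the label-consistency assumption on $\gS$ from \autoref{sec:theory_setup} is not needed for this proposition; it becomes relevant only once these idealized conditions are relaxed.
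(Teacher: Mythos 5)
Your proof is correct and follows essentially the same route as the paper's: both arguments show that under \autoref{cond:accuracy} and \autoref{cond:gap} the objectives of \autoref{eq:risk} and \autoref{eq:approx_risk} become identical as functions of $\vtheta$, so their minimizers achieve the same value. Your version simply makes explicit the measure-theoretic bookkeeping (first deriving $\hat{\mu}(\gX)=1$ before restricting to $\gX$ and swapping $\fnr$ for $\fs$) that the paper's one-sentence proof leaves implicit.
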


Together, \autoref{cond:accuracy} and \ref{cond:gap} provide sufficient conditions for the capacity-limited regime (see proof in \autoref{sec:proof}).
\autoref{cond:gap} (and associated bounds from~\cite{lee2018minimax}) generally indicates that the robust accuracy of the classifier $f(\cdot; \smash{\hat{\vtheta}^\star})$ should increase as the quality of the generative model that provides the approximated distribution $\smash{\generateddata}$ improves.
However, these two conditions do not provide a satisfying answer when it comes to understanding why seemingly random data can help improve robustness (as demonstrated in \autoref{sec:motivation}).
To help our understanding, it is worth analyzing the consequence of increasing the capacity of $f$.
In particular, in the infinite-capacity regime, \autoref{cond:gap} can be relaxed and replaced by the following two conditions, and \autoref{prop:limited} becomes \autoref{prop:infinite}.

\begin{condition}[unlikely adversarial examples]\label{cond:unlikely}
It is not possible to sample a point $\vx \sim \generateddata$ outside the \emph{realistic} set $\gX$ such that it is adversarial to $\fnr$:
$\hat{\mu}(\gW) = 0$ on the measurable subset $\gW = \{ x + \delta ~|~ x \in \gX, \delta \in \gS, f_\textrm{NR}(x+\delta) \neq f_\textrm{NR}(x) \}$.
\end{condition}

\begin{condition}[sufficient coverage]\label{cond:coverage}
The likelihood of any finite sample in the set of \emph{realistic} inputs $\mathcal{X}$ obtained from $\smash{\generateddata}$ should be non-zero under the measure $\hat{\mu}$: $\hat{\mu}(\gW) > 0$ for all open measurable subsets $\gW \subseteq \gX$.
\end{condition}

\begin{proposition}[infinite-capacity regime]\label{prop:infinite}
\autoref{cond:accuracy}, \autoref{cond:unlikely} and \autoref{cond:coverage} are sufficient conditions that allow the sub-optimal parameters $\smash{\hat{\vtheta}^\star}$ to match the performance of the optimal parameters $\smash{\vtheta^\star}$ when the model $f$ has infinite capacity.
\end{proposition}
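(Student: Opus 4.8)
The plan is to pass to the infinite-capacity limit, in which optimising over $\vtheta$ in \autoref{eq:risk} and \autoref{eq:approx_risk} amounts to optimising over all measurable classifiers $g:\gA\to\gY$; write $R(g)$ for the true adversarial risk of \autoref{eq:risk} and $\hat R(g)$ for the surrogate risk of \autoref{eq:approx_risk}, so that $f(\cdot;\vtheta^\star)$ minimises $R$ and $f(\cdot;\hat\vtheta^\star)$ minimises $\hat R$. I would then exhibit one classifier $g_0$ that simultaneously minimises $R$ and $\hat R$, with both minima equal to $0$; since $\hat\vtheta^\star$ is only pinned down up to the non-uniqueness of the argmin in \autoref{eq:approx_risk}, we may choose it so that $f(\cdot;\hat\vtheta^\star)=g_0$, and then $R(\hat\vtheta^\star)=R(g_0)=0=R(\vtheta^\star)$, which is exactly the assertion of \autoref{prop:infinite}.

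The candidate is the robustly-correct classifier: on the realistic neighbourhood $N:=\bigcup_{\vx\in\gX}(\vx+\gS)$ put $g_0(z):=\fs(\vx)$ for any realistic $\vx$ with $z\in\vx+\gS$, and put $g_0:=\fnr$ off $N$. For $g_0$ to be well defined on $N$ one needs the $\gS$-neighbourhoods of two realistic inputs carrying different labels to be disjoint; this is the (mild) strengthening of the label-separation hypothesis of \autoref{sec:theory_setup} I would make explicit, and it is precisely the hypothesis under which any classifier attaining $R=0$ can exist. Granting it, $R(g_0)=0$ is immediate, and because $\mu$ charges every non-empty subset of $\gX$ this is the actual minimum of $R$, so $R(\vtheta^\star)=0$.

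It then remains to show $\hat R(g_0)=0$, i.e.\ that for $\hat\mu$-almost every $\vx$ and every $\vdelta\in\gS$ one has $g_0(\vx+\vdelta)=\fnr(\vx)$. If $\vx\in\gX$, \autoref{cond:accuracy} gives $\fnr(\vx)=\fs(\vx)$ and $\vx+\vdelta\in\vx+\gS\subseteq N$, so $g_0(\vx+\vdelta)=\fs(\vx)=\fnr(\vx)$; here \autoref{cond:coverage} is what guarantees that the whole of $\gX$, rather than merely a $\hat\mu$-conull part of it, is seen, so that being a minimiser of $\hat R$ genuinely determines $g_0$ on all of $N$. If $\vx\notin\gX$, then by \autoref{cond:unlikely} we may assume $\vx$ lies outside the adversarial-example set $\gW$, i.e.\ $\vx$ is not a $\gS$-perturbation of any realistic input whose $\fnr$-label it contradicts; together with the construction of $g_0$ and the geometry of $\gS$ this should force $g_0(\vx+\vdelta)=\fnr(\vx)$ for all $\vdelta$, so such $\vx$ contribute no surrogate error. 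Hence $\hat R(g_0)=0=\min\hat R$, $g_0\in\argmin\hat R$, and the proof closes.

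The step I expect to be the real obstacle is the non-realistic case above: \autoref{cond:unlikely} kills the $\hat\mu$-mass of points that are themselves adversarial examples of realistic inputs, but one must still rule out that a legitimate non-realistic sample $\vx$, after some perturbation $\vdelta\in\gS$, lands in the $\gS$-neighbourhood of a realistic input whose label disagrees with $\fnr(\vx)$ — which is where the slack between $\epsilon$ and $2\epsilon$ in the triangle inequality bites. Closing this cleanly calls for either reading the label-separation hypothesis at the scale that makes it vacuous, or arguing that \autoref{cond:unlikely} and \autoref{cond:coverage} jointly confine the support of $\generateddata$ enough that no such conflict can occur; once that is handled, the rest (the reduction to classifier space, \autoref{cond:accuracy} for label agreement, and the two risk evaluations) is routine and mirrors the proof of \autoref{prop:limited}.
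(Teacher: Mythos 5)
Your overall strategy---drive both risks to zero and argue the two minimizations must then agree---is the same as the paper's, and much of your argument is a more careful version of what the paper leaves implicit. The paper's proof is terse: \autoref{cond:unlikely} extends the non-conflicting-labels assumption so that the objective of \autoref{eq:approx_risk} can be driven to zero with infinite capacity, and then \autoref{cond:accuracy} together with \autoref{cond:coverage} forces any classifier achieving zero surrogate risk to also achieve zero true risk. The two obstacles you flag honestly---the $\epsilon$-versus-$2\epsilon$ label-separation needed for your $g_0$ to be well defined, and the behaviour of $\fnr$ on generated points far from $\gX$---are real and are glossed over by the paper as well, so raising them is a strength of your write-up rather than a defect.

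The genuine gap is the logical direction of your closing step. You exhibit one common zero $g_0$ of $R$ and $\hat R$ and then \emph{choose} $\hat{\vtheta}^\star$ so that $f(\cdot;\hat{\vtheta}^\star)=g_0$. That establishes only that \emph{some} minimizer of \autoref{eq:approx_risk} matches the performance of $\vtheta^\star$; the proposition asserts that minimizing the surrogate is \emph{sufficient}, i.e.\ that every minimizer does, since nothing in the training procedure lets you pick among the argmin. This is precisely where \autoref{cond:coverage} and \autoref{cond:accuracy} do their work in the paper's proof: if $\hat R(\vtheta)=0$, then $f(\cdot;\vtheta)$ agrees robustly with $\fnr$ around $\hat{\mu}$-almost every generated point; because every open subset of $\gX$ has positive $\hat{\mu}$-mass and $\fnr=\fs$ on $\gX$, the classifier is robustly correct on essentially all of $\gX$, hence $R(\vtheta)=0$. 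Your parenthetical remark that coverage ``genuinely determines $g_0$ on all of $N$'' is the germ of exactly this implication, but it is lodged inside your verification that $\hat R(g_0)=0$ (a step that does not need coverage at all) and is never developed into the statement $\hat R(\vtheta)=0\Rightarrow R(\vtheta)=0$ for arbitrary $\vtheta$. Restructure so that coverage is used to propagate zero surrogate risk to zero true risk for an arbitrary minimizer, and the argument closes; the construction of $g_0$ then serves only to certify that the minimum of $\hat R$ is indeed zero.
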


\autoref{cond:unlikely} enforces that labels are non-conflicting within the perturbation set of a \emph{realistic} input,\footnote{
Unless the generative model is trained to produce adversarial examples, random sampling is unlikely to produce images that are adversarial \cite{brunner2019guessing}.
In fact, even strong black-box adversarial attacks require thousands of model queries to find adversarial examples.}
while \autoref{cond:coverage} guarantees that \emph{realistic} inputs appear with enough frequency during training.
Together, \autoref{cond:accuracy}, \ref{cond:unlikely} and \ref{cond:coverage} do not only provide sufficient conditions for the infinite-capacity regime (see proof in \autoref{sec:proof}),
but also explain why samples generated by a simple class-conditional Gaussian-fit can be used to improve robustness.
Indeed, they imply that it is not necessary to have access to either the true data distribution or a perfect generative model when given enough compute and capacity.
However, when compute and capacity are limited, it is critical that the optimization in \autoref{eq:approx_risk} focuses on \emph{realistic} inputs and that the distribution $\smash{\generateddata}$ be as close as possible to the true distribution $\realdata$.
In practice, this translates to the fact that better generative models (such as \gls{ddpm}) can be used to achieve better robustness.
We have relegated a discussion about the theoretical impact of the mixing factor $\alpha$ in \autoref{sec:impact_alpha}. Briefly stated, increasing $\alpha$ improves the realism of training samples (since the training samples mostly come from the original training set), but comes at the cost of a reduction in complementarity with the training set.

\section{Generative models}
\label{sec:generative_models}

The derivations from \autoref{sec:random_is_enough} and the experiment performed in \autoref{sec:motivation} strongly suggest that generative models, which are capable of creating novel images~\citep{dalle}, are viable augmentation candidates for adversarial training.

\paragraph{Generative models considered in this work.}

In this work, we limit ourselves to generative models that are solely trained on the original train set, as we focus on how to improve robustness in the setting without external data.
We consider four recent and fundamentally different models: \textit{(i)} BigGAN~\citep{brock2018large}: one of the first large-scale application of \glspl{gan} which produced significant improvements in \gls{fid} and \gls{is} on \cifar (as well as on \imagenet); \textit{(ii)} \gls{vdvae}~\citep{child2021vdvae}: a hierarchical \gls{vae} which outperforms alternative \gls{vae} baselines; \textit{(iii)} StyleGAN2~\citep{karras2020analyzing}: an improved version of StyleGAN which borrows interesting properties from the style transfer literature; and \textit{(iv)} \gls{ddpm}~\citep{ho2020denoising}: a diffusion probabilistic model based on Langevin dynamics that reaches state-of-the-art \gls{fid} on \cifar.\footnote{We use \gls{vdvae}, StyleGAN2 and \gls{ddpm} checkpoints available online and train our own BigGAN.}
As we have done for the simpler class-conditional Gaussian-fit, for each model, we sample 100K images per class, resulting in 1M images in total (see \autoref{sec:details_data_augment} for details).
Samples are shown in \autoref{sec:generated_samples}.

\begin{table}[t]
\caption{
Complementarity and coverage of augmented and generated samples.
We sample 10K images from the train set and various different generative models.
For each sample in each set, we find its closest neighbor in Inception feature space (obtained after the pooling layer).
To estimate \uline{complementarity}, we report the proportion of samples with a nearest neighbor in either the train set, test set or the sampled set itself.
To estimate \uline{coverage}, we report the proportion of unique neighbors in the train and test set.
We also include the IS and FID computed from 50K samples from each set and the robust accuracy obtained by a \textsc{Wrn}-28-10 models trained on 1M samples (\autoref{sec:experiments}).
\label{table:similarity_train_test_self_gen}}
\begin{center}
\resizebox{\textwidth}{!}{
\begin{tabular}{l|ccc|cc|cc|c}
    \hline
    \cellcolor{header} & \multicolumn{3}{c|}{\cellcolor{header} \textsc{Complementarity}} & \multicolumn{2}{c|}{\cellcolor{header} \textsc{Coverage}}  & \multicolumn{2}{c|}{\cellcolor{header} \textsc{Inception Metrics}} & \cellcolor{header}\textsc{Robust} \TBstrut \\
    \cellcolor{header} \textsc{Setup} & \cellcolor{header} \textsc{Train} & \cellcolor{header} \textsc{Test} & \cellcolor{header} \textsc{Self} & \cellcolor{header} \textsc{Train} & \cellcolor{header} \textsc{Test} & \cellcolor{header} \textsc{Is} $\uparrow$ & \cellcolor{header} \textsc{Fid} $\downarrow$ & \cellcolor{header}\textsc{Accuracy} $\uparrow$ \TBstrut \\
    \hline
    \emph{mixup}~\citep{zhang2017mixup} & 90.34\% & 3.91\% & 5.75\% & 90.43\% & 45.61\% & $9.33 \pm 0.22$ & 7.71 & \TBstrut \\
    \hline
    Class-conditional Gaussian-fit & 0.13\% & 0.22\% & 99.65\% & 12.36\% & 12.24\% & $3.64 \pm 0.03$ & 117.62 & 55.37\% \Tstrut \\
    VDVAE~\citep{child2021vdvae} & 11.97\% & 12.14\% & 75.89\% & 34.20\% & 33.76\% & $6.88 \pm 0.05$ & 26.44 & 55.51\% \\
    BigGAN~\citep{brock2018large} & 14.97\% & 14.81\% & 70.22\% & 38.86\% & 39.06\% & $9.73 \pm 0.07$ & 13.78 & 55.99\% \\
    StyleGAN2~\citep{karras2020analyzing} & 28.13\% & 27.22\% & 44.65\% & 50.16\% & 48.29\% & $10.04 \pm 0.11$ & 2.57 & 58.17\% \\
    DDPM~\citep{ho2020denoising} & 29.29\% & 29.17\% & 41.54\%  & 49.07\% & 49.10\% & $9.50 \pm 0.14$ & 3.15 & 60.73\% \Bstrut \\
    \hline
\end{tabular}
}
\end{center}
\end{table}

\paragraph{Analysis of complementary and coverage.}

In \autoref{table:similarity_train_test_self_gen}, we evaluate how close \autoref{cond:gap} and \autoref{cond:coverage} are to be satisfied in practice.
To do so, we sample 10K images from each generative model.
We also sample 10K images from the \cifar training set, and apply \emph{mixup} to them as a point of comparison.\footnote{According to prior work, \emph{mixup} is unable to improve robust accuracy beyond the one obtained with random cropping/flipping when using early stopping~\citep{rice_overfitting_2020}. \autoref{table:similarity_train_test_self_gen_aug} in the appendix shows more data augmentations.}
We observe that \emph{mixup} achieves a similar \gls{is} to the BigGAN and \gls{ddpm} models.
In the left-most set of three columns, for each augmented or generated sample, we report whether its closest neighbor in the Inception\footnote{Using the LPIPS~\cite{zhang2018unreasonable} feature space (see \autoref{table:similarity_train_test_self_gen_lpips} in the appendix) provides similar conclusions.} feature space belongs to the train set, test set or the generated set itself (more details are available in \autoref{sec:details_data_augment}).
An ideal generative model should create samples that are equally likely to be close to images from each set.
We observe that \emph{mixup} tends to produce samples that are too close to the original train set and that lack complementarity, potentially explaining its limited usefulness in terms of improving adversarial robustness.
Meanwhile, generated samples (including those from the class-conditional Gaussian-fit) are much more likely to be close to images of the test set.
We also observe that the \gls{ddpm} neighbor distribution matches more closely the ideal uniform distribution.
Images generated by BigGAN and \gls{vdvae} tend to have their nearest neighbor among themselves which indicates that these samples are either far from the train and test distributions or produce overly similar samples.
Images generated by StyleGAN2, which reach an \gls{fid} of 2.57 and \gls{is} of 10.07 that are better than the \gls{ddpm} scores, have a slightly worse neighbor distribution (indicating a slight memorization of the training set).
The middle two columns measure the ratio of unique neighbors that are matched in the train and test set.
This provides a rough approximation of coverage.
We observe a similar trend where samples from the \gls{ddpm} seem to provide a better coverage of the true data distribution.
Note that, while these numbers rely on an inaccurate distance measure (i.e., Euclidean distance in Inception feature space) and should be taken with a grain of salt, they correlate well with the results obtained from our experiments.
For example, models trained with StyleGAN2 samples obtain a lower robust accuracy than those trained with \gls{ddpm} samples -- despite obtaining better \gls{fid} and \gls{is}.

\section{Experiments}
\label{sec:experiments}
\label{sec:results}

The experimental setup is explained in \autoref{sec:experimental_setup}.
We use Residual Networks (ResNets) and \glspl*{wrn}~\citep{he2015deep,zagoruyko2016wide} with Swish/SiLU~\citep{hendrycks2016gaussian} activations.
We use stochastic weight averaging~\citep{izmailov_averaging_2018} with a decay rate of $0.995$.
For adversarial training, we use TRADES~\citep{zhang_theoretically_2019} with 10 \gls*{pgd} steps.
We train for $400$ \cifar-equivalent epochs with a batch size of $1024$ (i.e., $19$K steps).
We evaluate our models against \autoattack~\citep{croce_reliable_2020} and \multitargeted~\citep{gowal_alternative_2019}, which is denoted \textsc{AA+MT}~\citep{gowal_uncovering_2020}.
For comparison, we trained ten \wrn-28-10 models on \cifar (without additional generated samples) against $\epsilon_\infty = 8/255$.
The resulting robust accuracy is 54.44$\pm$0.39\%, thus showing a relatively low variance.
Furthermore, as we will see, our best models are well clear of the threshold for statistical significance.
On \cifar against $\epsilon_\infty = 8/255$ without additional generated samples a ResNet-18 achieves a robust accuracy of 50.64\% and a \wrn-70-16 achieves 57.14\%.
Unless stated otherwise, all results pertain to \cifar.

\begin{figure*}[t]
\centering
\subfigure[Condition \ref{cond:accuracy} \label{fig:non_robust_accuracy_v2}]{\includegraphics[width=0.32\textwidth]{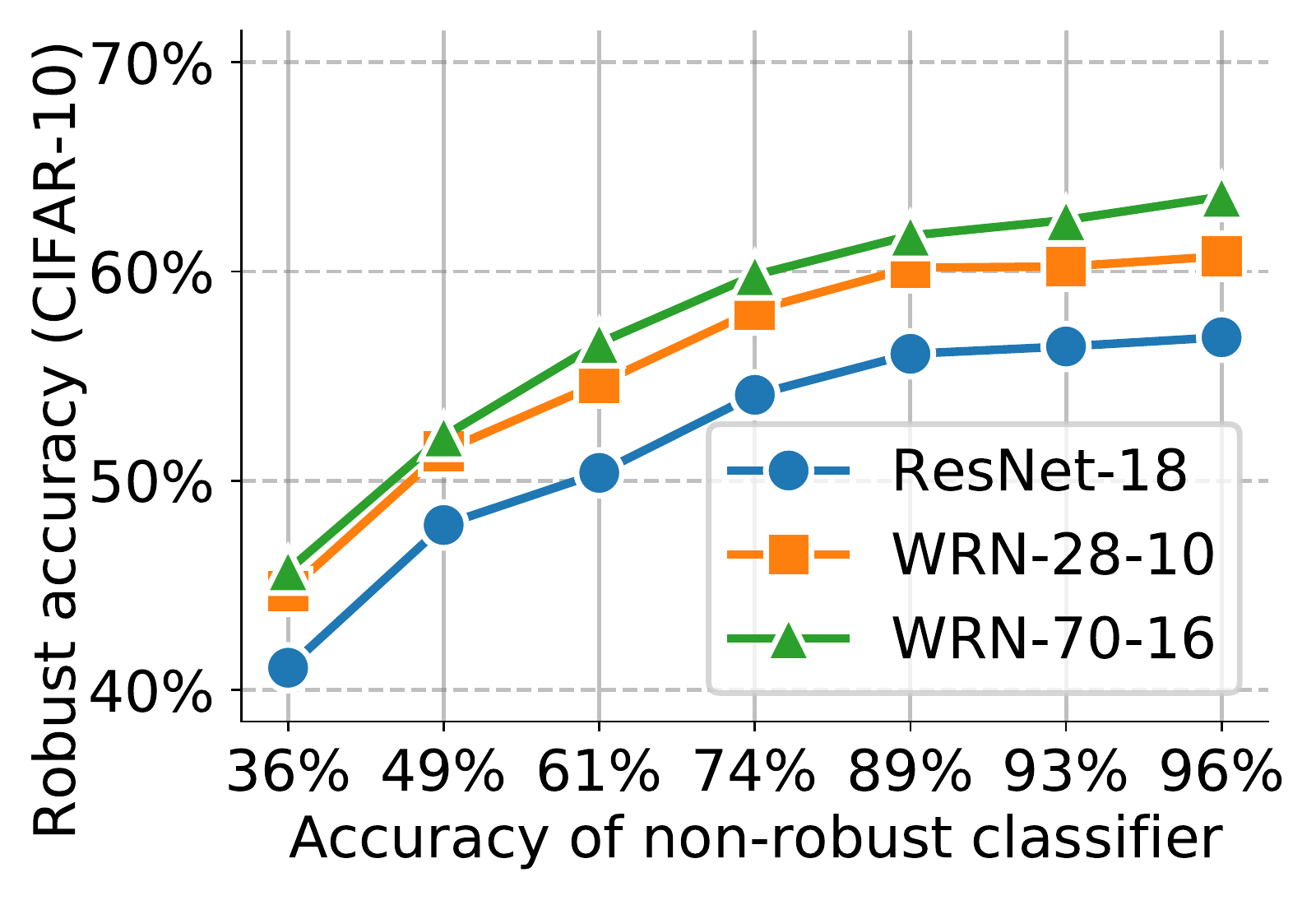}}
\subfigure[Condition \ref{cond:gap} \label{fig:biggan_generated_ratio}]{\includegraphics[width=0.32\textwidth]{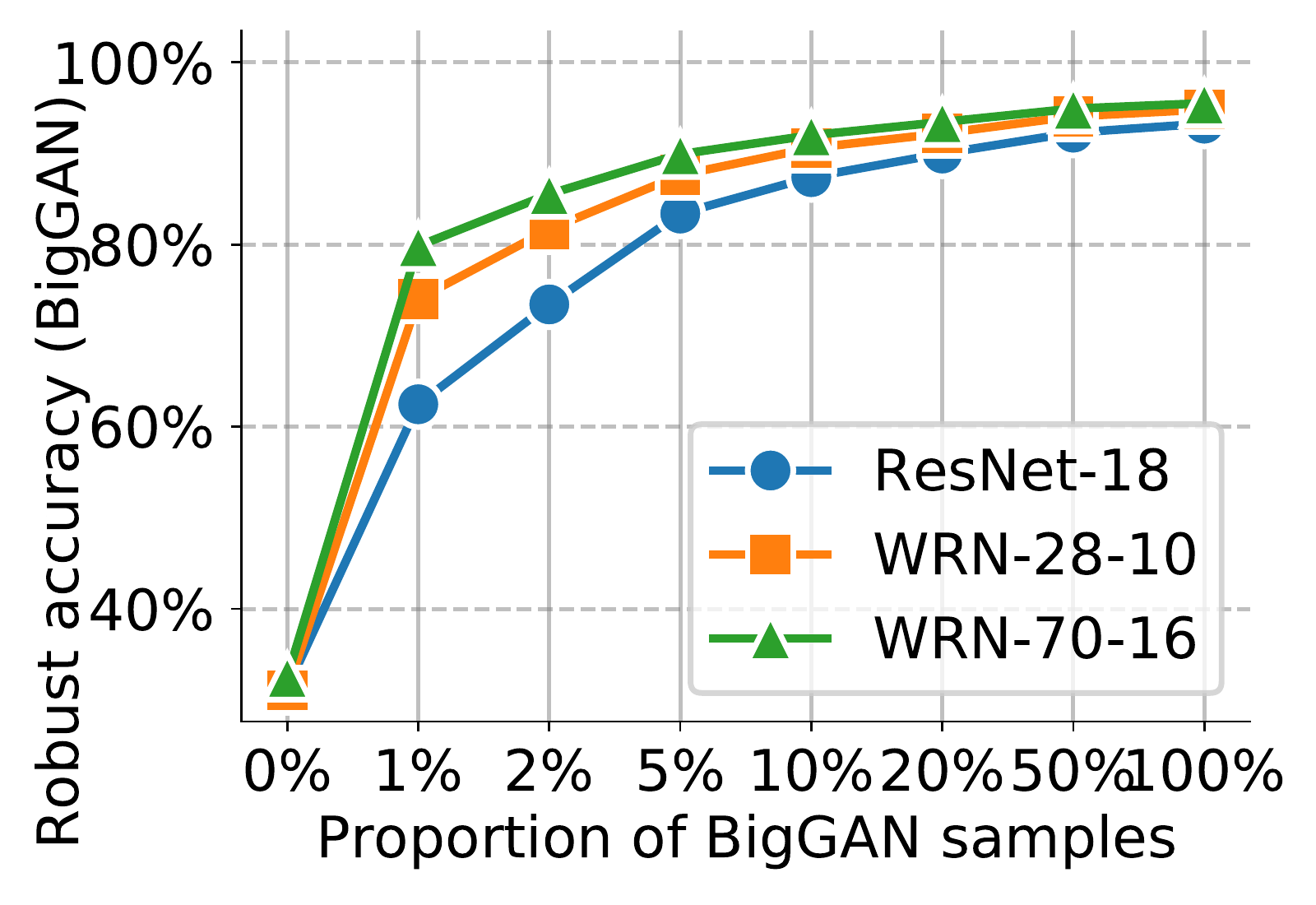}}
\subfigure[Condition \ref{cond:coverage} \label{fig:biggan_set_coverage}]{\includegraphics[width=0.32\textwidth]{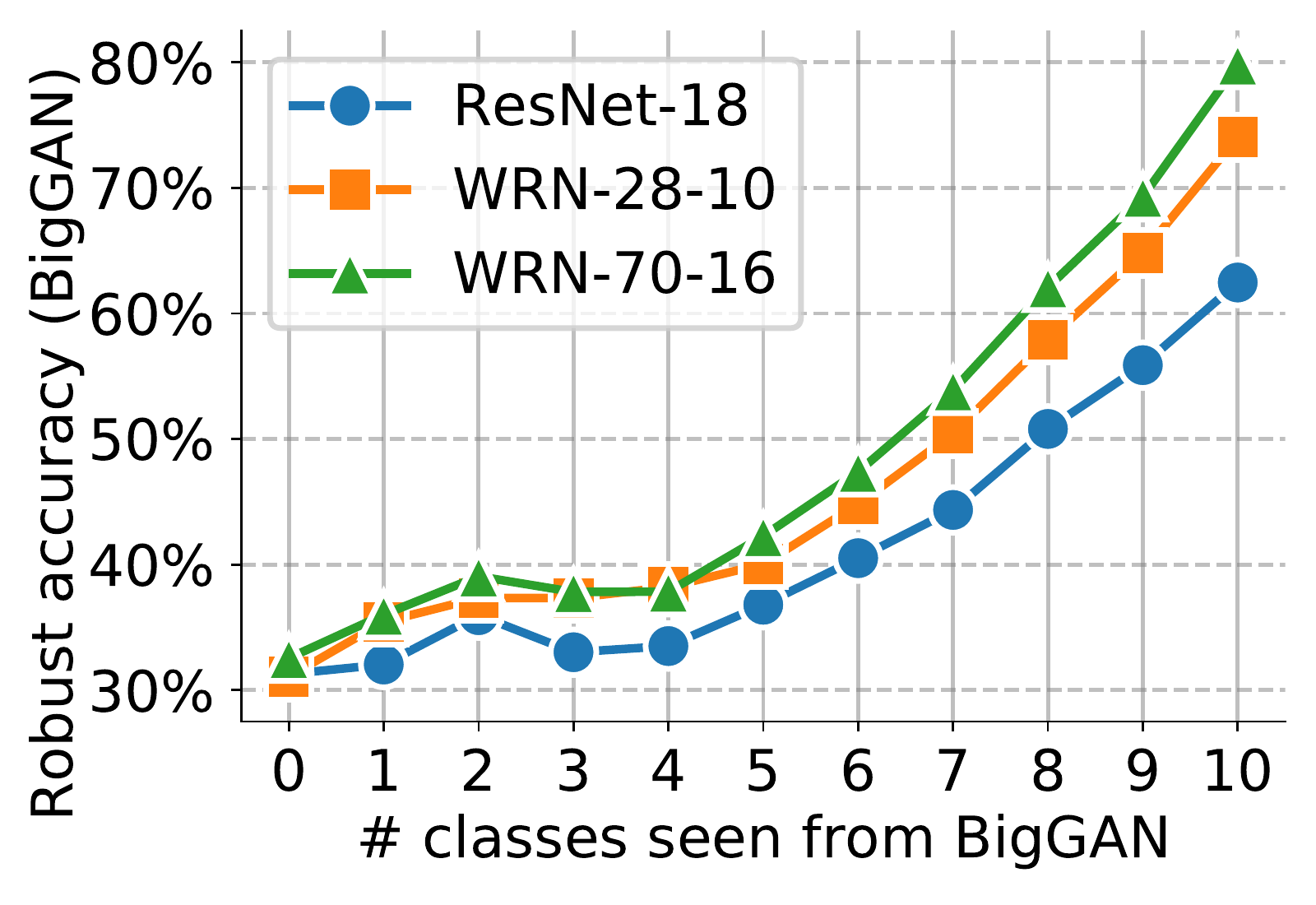}}
\caption{Impact of violations of the sufficient conditions detailed in \autoref{sec:random_is_enough}. We report the robust test accuracy (against \textsc{AA+MT}~\citep{gowal_uncovering_2020}) when training different model architectures against $\epsilon_\infty = 8/255$. In panel \subref{fig:non_robust_accuracy_v2}, non-robust classifiers with different clean acccuracies are used for pseudo-labeling. In panel \subref{fig:biggan_generated_ratio}, we vary the mixture of training samples from a class-conditional Gaussian and a BigGAN distribution while the test distribution is the BigGAN distribution. In panel \subref{fig:biggan_set_coverage}, we fix the proportion of samples from the class-conditional Gaussian to 99\% and increase the number of classes from the BigGAN distribution seen during training (thus increasing coverage).
\label{fig:conditions}}
\end{figure*}

\subsection{Sufficient conditions}

The first set of experiments probes how violations of \autoref{cond:accuracy}, \ref{cond:gap} and \ref{cond:coverage} impact robustness against $\epsilon_\infty = 8/255$ (violations to \autoref{cond:unlikely} have an impact equivalent to those of \autoref{cond:accuracy}).
All experiments are summarized in \autoref{fig:conditions} where we train models with increasing capacity.

\paragraph{Non-robust classifier accuracy.}

In \autoref{fig:non_robust_accuracy_v2}, we train models using 1M samples generated by the \gls{ddpm} and vary the accuracy of the the pre-trained, non-robust classifier $\fnr$.
We evaluate the robust accuracy obtained on \cifar test set.
We observe that robustness improves as the accuracy of $\fnr$ increases.
Notably, even with the 74.47\%-accurate non-robust classifier, the \wrn-28-10 and \wrn-70-16 obtain robust accuracies of 58.15\% and 59.83\%, respectively, and already improve upon the state-of-the-art (57.14\% at the time of writing).
Thus, validating that, in practice, it is not necessary to have access to a perfect non-robust classifier.

\paragraph{Quality of the generative models.}

To analyze how the quality of the generative model influences robustness, we use the BigGAN to model the ``true'' data distribution.
During training, we use samples generated from a mixture of the class-conditional Gaussian and BigGAN distributions; during testing, we evaluate on a separate subset of 10K unseen BigGAN samples.
To probe \autoref{cond:gap}, we change the proportion of training samples from the class-conditional Gaussian.
In effect, decreasing the proportion of such samples skews the mixed generative model (modeled by the mixture of Gaussian and BigGAN distributions) to produce more samples from the true distribution (modeled by the BigGAN distribution), thereby closing the gap between the approximated distribution $\smash{\generateddata}$ and true distribution $\realdata$.
As expected, \autoref{fig:biggan_generated_ratio} demonstrates that, given enough capacity, models can significantly reduce the adversarial risk.

\vspace{-.3cm}
\paragraph{Relationship between coverage and capacity.}

Similarly to \autoref{fig:biggan_generated_ratio}, we use samples generated from a mixture of the class-conditional Gaussian and BigGAN distributions; during testing, we evaluate on a separate subset of 10K unseen BigGAN samples.
To probe \autoref{cond:coverage}, we keep the proportion of samples from the class-conditional Gaussian distribution fixed at 99\% and use the remaining 1\% to include BigGAN samples corresponding to either 0, 1, \ldots or 10 classes (thereby increasing coverage).
In other words, the coverage of the true data distribution $\realdata$ (given by the BigGAN) increases as the number of seen classes increases.
However, the approximated distribution $\smash{\generateddata}$ remains different from the true data distribution even when the coverage reaches all classes (as the proportion of Gaussian samples is fixed to 99\%).
We observe in \autoref{fig:biggan_set_coverage} that the robust accuracy of models with lower capacity improves less drastically -- yielding a gap of 17.37\% at full coverage between the ResNet-18 and \wrn-70-16 models.
This observation confirms that, with enough coverage, model capacity can compensate for the lack of a perfect generative model.

\vspace{-.3cm}
\paragraph{Discussion.}
Overall, \autoref{fig:biggan_generated_ratio} shows that when \autoref{cond:gap} is satisfied, the difference between models reduces and capacity takes a secondary role (since all models can bring their adversarial risk close to zero). \autoref{fig:biggan_set_coverage} shows that when \autoref{cond:coverage} is satisfied (and \autoref{cond:gap} is not), capacity matters as we observe that larger models benefit more from increased coverage.
Both figures point to the fact that the quality of the generative model becomes less important when the capacity of the classifiers increases (as long as coverage is sufficient).

\begin{figure}[t]
\centering
  \begin{minipage}[c]{0.45\textwidth}
    \includegraphics[width=\textwidth]{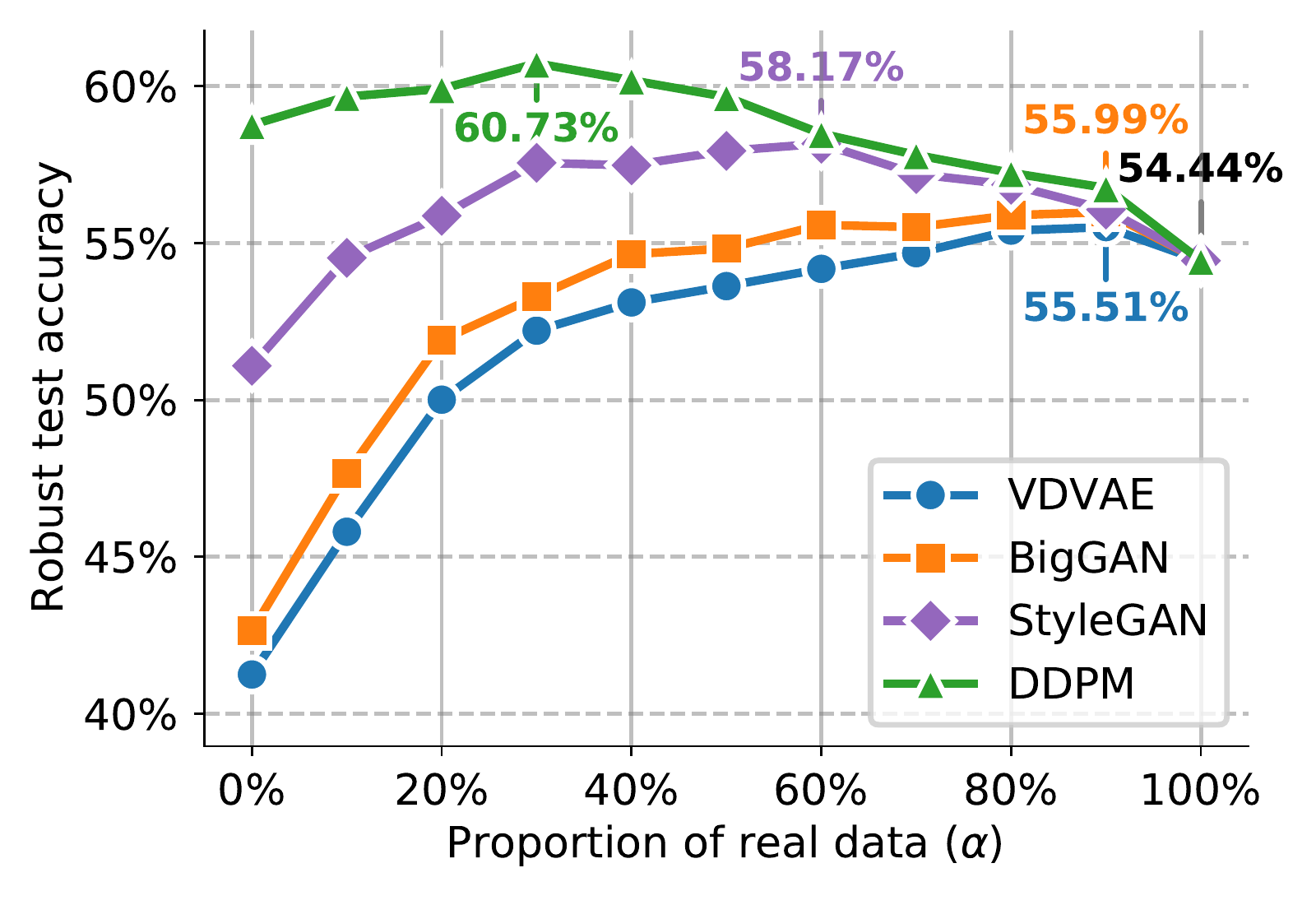}
  \end{minipage}\hfill
  \begin{minipage}[c]{0.5\textwidth}
    \caption{Robust test accuracy obtained by training a \wrn-28-10 against $\epsilon_\infty = 8/255$ on \cifar when using additional data produced by different generative models. We compare how the ratio between original and generated images (i.e., $\alpha$) affects robustness (0\% means generated samples only, 100\% means \cifar train set only). \label{fig:generative_ratio}}
  \end{minipage}
\end{figure}

\begin{table}[t]
\centering
\vspace{-.5cm}
\begin{minipage}[c]{0.38\textwidth}
    \caption{Clean (without perturbations) and robust (under adversarial attack) accuracy obtained by different models (we pick the worst accuracy obtained by either \autoattack or \textsc{AA+MT}). The accuracies are reported on the full test sets. For \cifar, we test against $\epsilon_\infty = 8/255$ and $\epsilon_2 = 128/255$. For \cifarh, \svhn and \tinyimagenet, we test against $\epsilon_\infty = 8/255$. For \imagenet, we test against $\epsilon_\infty = 4/255$. * This model is trained for 2000 epochs on 100M samples. ** This model is trained for 200 epochs on 5M samples.
  \label{table:results}}
  \end{minipage}\hfill
  \begin{minipage}[c]{0.6\textwidth}
\resizebox{\textwidth}{!}{
\begin{tabular}{l|cc|cc}
    \hline
    \cellcolor{header} \textsc{Model} & \cellcolor{header} \textsc{Dataset} & \cellcolor{header} \textsc{Norm} & \cellcolor{header} \textsc{Clean} & \cellcolor{header} \textsc{Robust} \TBstrut \\
    \hline
    \citet{wu2020adversarial} (\wrn-34-10) & \multirow{4}{*}{\cifar} & \multirow{4}{*}{\linf} & 85.36\% & 56.17\% \Tstrut \\
    \citet{gowal_uncovering_2020} (\wrn-70-16)  & & & 85.29\% & 57.14\% \\
    Ours (\gls{ddpm}) (\wrn-28-10) & & & 85.97\% & 60.73\% \\
    Ours (\gls{ddpm}) (\wrn-70-16) & & & 86.94\% & 63.58\% \\
    Ours (100M \gls{ddpm})* (ResNet-18) & & & 87.35\% & 58.50\% \\
    Ours (100M \gls{ddpm})* (\wrn-28-10) & & & 87.50\% & 63.38\% \\
    Ours (100M \gls{ddpm})* (\wrn-70-16) & & & \textbf{88.74\%} & \textbf{66.10\%} \Bstrut \\
    \hline
    \citet{wu2020adversarial} (\wrn-34-10) & \multirow{4}{*}{\cifar} & \multirow{4}{*}{\ltwo} & 88.51\% & 73.66\% \Tstrut \\
    \citet{gowal_uncovering_2020} (\wrn-70-16)  & & & \textbf{90.90\%} & 74.50\% \\
    Ours (\gls{ddpm}) (\wrn-28-10) & & & 90.24\% & 77.37\% \\
    Ours (\gls{ddpm}) (\wrn-70-16) & & & 90.83\% & \textbf{78.31\%} \Bstrut \\
    \hline
    \citet{cui2020learnable} (\wrn-34-10) & \multirow{4}{*}{\cifarh} & \multirow{4}{*}{\linf} & 60.64\% & 29.33\% \Tstrut \\
    \citet{gowal_uncovering_2020} (\wrn-70-16) & & & \textbf{60.86\%} & 30.03\% \\
    Ours (\gls{ddpm}) (\wrn-28-10) & & & 59.18\% & 30.81\% \\
    Ours (\gls{ddpm}) (\wrn-70-16) & & & 60.46\% & \textbf{33.49\%} \Bstrut \\
    \hline
    Ours (without \gls{ddpm}) (\wrn-28-10) & \multirow{2}{*}{\svhn} & \multirow{2}{*}{\linf} & 92.87\% & 56.83\% \Tstrut \\
    Ours (\gls{ddpm}) (\wrn-28-10) & & & \textbf{94.15\%} & \textbf{60.90\%} \Bstrut \\
    \hline
    Ours (without \gls{ddpm}) (\wrn-28-10) & \multirow{2}{*}{\tinyimagenet} & \multirow{2}{*}{\linf} & 51.56\% & 21.56\% \Tstrut \\
    Ours (\gls{ddpm}) (\wrn-28-10) & & & \textbf{60.95\%} & \textbf{26.66\%} \Bstrut \\
    \hline
    Ours (without \gls{ddpm})** (ResNet-152) & \multirow{2}{*}{\imagenet} & \multirow{2}{*}{\linf} & 68.74\% & 44.44\% \Tstrut \\
    Ours (\gls{ddpm})** (ResNet-152) & & & \textbf{68.78\%} & \textbf{45.30\%} \Bstrut \\
    \hline
\end{tabular}
}
  \end{minipage}
\end{table}

\subsection{State-of-the-art robust accuracy}

\paragraph{Effect of mixing factor ($\alpha$).}

As done in \autoref{sec:method}, we vary the proportion $\alpha$ of original images in each batch for all generated datasets.
\autoref{fig:generative_ratio} explores a wide range of proportions while training a \wrn-28-10 against $\epsilon_\infty = 8/255$ on \cifar.
Samples from all models improve robustness when mixed optimally, but only samples from the StyleGAN2 and \gls{ddpm} improve robustness significantly (+3.73\% and +6.29\%, respectively).
It is also interesting to observe that, in the case of the \gls{ddpm}, using 1M generated images is better than using the 50K images from the original train set only.
While this may seem surprising, it can easily be explained if we assume that the \gls{ddpm} produces many more high-quality, high-diversity images than the limited set of images present in the original data (c.f.~\cite{schmidt_adversarially_2018}).
We also observe that the optimal mixing factor is different for different generative models.
Indeed, increasing $\alpha$ reduces the gap to the true data distribution at the cost of less complementarity with the original train set (see \autoref{sec:impact_alpha}).

\paragraph{\cifar.}

Table~\ref{table:results} shows the performance of models trained with 1M samples generated by the \gls{ddpm} on \cifar against $\epsilon_\infty = 8/255$ and $\epsilon_2 = 128/255$.
Irrespective of their size, models trained with 1M \gls{ddpm} samples surpass the current state-of-the-art in robust accuracy by a large margin (+6.44\% and +3.81\%).
When using 100M \gls{ddpm} samples (and training for 2000 epochs), we reach 66.10\% robust accuracy against $\epsilon_\infty = 8/255$ which constitutes an improvement of +8.96\% over the state-of-the-art.
In this setting, our smallest model (ResNet-18) surpasses state-of-the-art results obtained by much larger models (e.g., \wrn-70-16).
Most remarkably, \uline{despite not using any external data}, against $\epsilon_\infty = 8/255$, our best model beats all RobustBench~\cite{croce2020robustbench} entries that used external data (see \autoref{table:robustbench} in the appendix).

\paragraph{Generalization to other datasets (\cifarh, \svhn, \tinyimagenet and \imagenet).}

Finally, to evaluate the generality of our approach, we evaluate it on \cifarh, \svhn~\cite{netzer2011reading}, \tinyimagenet~\cite{tinyimagenet} and \imagenet~\cite{krizhevsky_imagenet_2012}.
We train two new \gls{ddpm} on the train set of \cifarh and \svhn and sample 1M images from each.
For \tinyimagenet and \imagenet, we use a \gls{ddpm} trained on \imagenet~\cite{dhariwal2021diffusion} at $64\times64$ and $256\times256$ resolutions.
For \tinyimagenet, we restrict samples to the valid set of 200 classes.
The results are shown in Table~\ref{table:results}.
On \cifarh, our best model reaches a robust accuracy of 33.49\% and improves noticeably upon the state-of-the-art by +3.46\% (in the setting that does not use any external data).
On \svhn, in the same table, we compare models trained without and with \gls{ddpm} samples.
Again, the addition of \gls{ddpm} samples significantly improves robustness, with the robust accuracy improving by by +4.07\%.
On \tinyimagenet, the improvement is +5.10\%.
On \imagenet, the improvement is +0.86\% when training for 200 epochs (see \autoref{sec:additional_results} for more details).

\section{Conclusion.}

Using generative models, we posit and demonstrate that generated samples provide a greater diversity of augmentations that allow adversarial training to go well beyond the current state-of-the-art.
Our work provides novel insights into the effect of diversity and complementarity on robustness, which we hope can further our understanding of robustness.
All our models and generated datasets are available online at \githuburl.

\bibliography{bibliography}
\bibliographystyle{abbrvnat}

\clearpage
\appendix

\section{Experimental setup}
\label{sec:experimental_setup}

The implementation of the following setup is written in JAX~\citep{bradbury_jax_2018} and Haiku~\citep{hennigan_haiku_2020}.

\paragraph{Architecture.}

We use Residual Networks (ResNets) and Wide ResNets ({\wrn}s)~\citep{he2015deep,zagoruyko2016wide}.
This is consistent with prior work \citep{madry_towards_2017,rice_overfitting_2020,zhang_theoretically_2019,uesato_are_2019,gowal_uncovering_2020} which use diverse variants of these network families.
Furthermore, we adopt the same architecture details as \citet{gowal_uncovering_2020} with Swish/SiLU~\citep{hendrycks2016gaussian} activation functions.
Most of the experiments are conducted on a \wrn-28-10 model which has a depth of 28, a width multiplier of 10 and contains 36M parameters.
To evaluate the effect of using additional generated data on wider and deeper networks, we also run several experiments using \wrn-70-16, which contains 267M parameters.

\paragraph{Outer minimization.}
We use TRADES~\citep{zhang_theoretically_2019} optimized using SGD with Nesterov momentum~\citep{polyak1964some, nesterov27method} and a global weight decay of $5 \times 10^{-4}$. 
We use a batch size of $1024$ split over $32$ Google Cloud TPUv3 cores~\citep{kumar2019scale}, train for $400$ \cifar-equivalent epochs (resulting in $19$K training steps), and use a \emph{cosine} learning rate schedule~\citep{SGDR} without restarts where the initial learning rate is set to 0.4 and is decayed to 0 by the end of training (similar to \citep{gowal_uncovering_2020}).
We also use model weight averaging (WA)~\citep{izmailov_averaging_2018} with a decay rate of $\tau=0.995$.
With this setup, training a \wrn-28-10, a \wrn-70-16 and a ResNet-18 takes 2.5 hours, 6 hours and 22 minutes, respectively.

\paragraph{Inner minimization.}
Adversarial examples are obtained by maximizing the  Kullback-Leibler divergence between the predictions made on clean inputs and those made on adversarial inputs~\citep{zhang_theoretically_2019}.
This optimization procedure is done using the Adam optimizer~\citep{kingma_adam:_2014} with a step-size of $0.1$ and $10$ steps.

\paragraph{Evaluation.}

We follow the evaluation protocol designed by \citet{gowal_uncovering_2020}.
Specifically, we train two (and only two) models for each hyperparameter setting, perform early stopping for each model on a separate validation set of 1024 samples using \pgd{40} (i.e., \gls{pgd} with 40 gradient ascent steps) similarly to~\citet{rice_overfitting_2020} and pick the best model by evaluating the robust accuracy on the same validation set.
The average absolute difference between these two models is -0.12\% in test robust accuracy (as measured over 10 separate runs).
Unless stated otherwise, we always report the robust test accuracy against a mixture of \autoattack~\citep{croce_reliable_2020} and \multitargeted~\citep{gowal_alternative_2019}, which is denoted by \textsc{AA+MT}.
This mixture consists in completing the following sequence of attacks: \autopgd on the cross-entropy loss with 5 restarts and 100 steps, \autopgd on the difference of logits ratio loss with 5 restarts and 100 steps and finally \multitargeted on the margin loss with 10 restarts and 200 steps.
We note that, while early stopping is not necessary when using the cosine learning rate schedule, we keep it to be consistent with prior work.

\section{Additional results}
\label{sec:additional_results}

\begin{figure}[b]
\centering
  \begin{minipage}[c]{0.68\textwidth}
    \includegraphics[width=\textwidth]{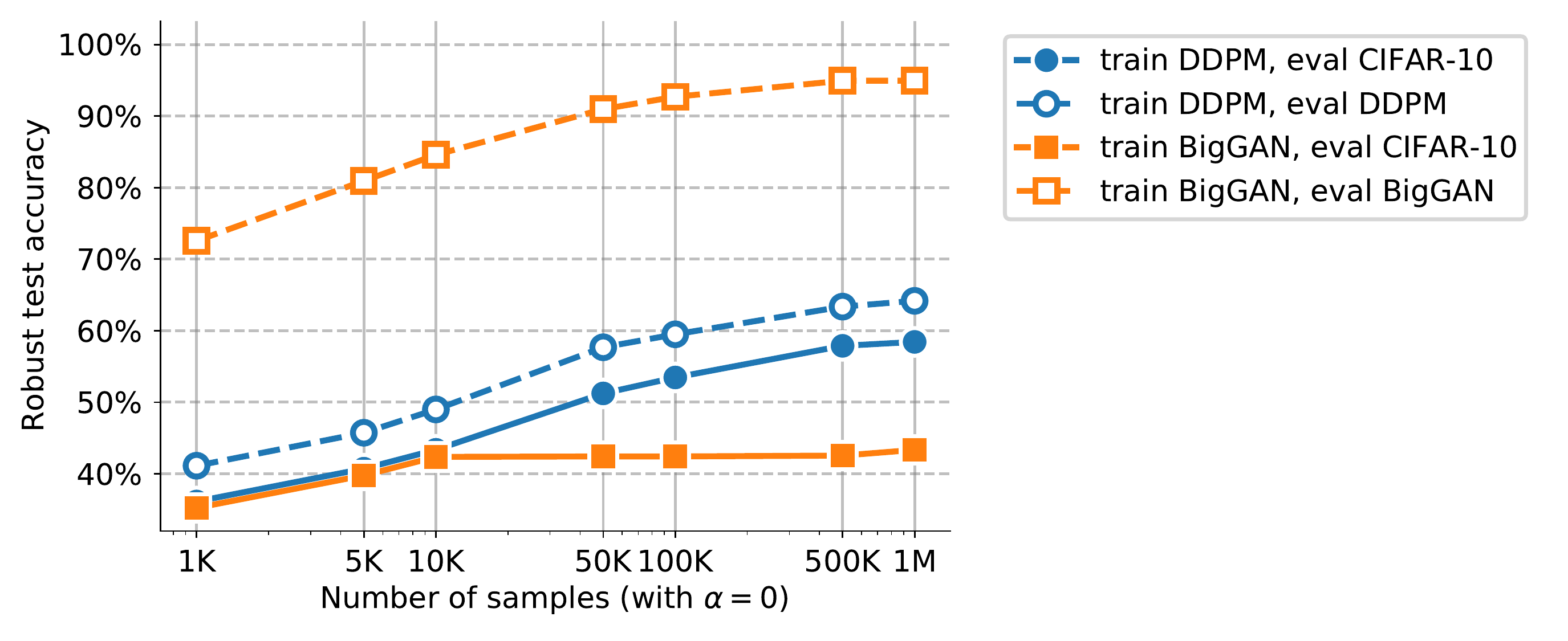}
  \end{minipage}\hfill
  \begin{minipage}[c]{0.32\textwidth}
    \caption{Robust test accuracy when training a \textsc{Wrn}-28-10 using a variable number of samples from a \gls{ddpm} or BigGAN. We compare the robust accuracy on the \cifar test set with the one obtained on a separate set of generated samples. \label{fig:num_samples}}
  \end{minipage}
\end{figure}

\paragraph{Scaling dataset size.}

Using a generative model allows us to sample many more images than available in the original training set.
In \autoref{fig:num_samples}, we set the mixing factor $\alpha$ to zero (thus only using generated samples) and vary the number of training samples.
We evaluate the robust accuracy of the resulting model on the \cifar test set and on a separate validation set composed of 10K generated samples.
We also compare models trained on BigGAN and \gls{ddpm} samples.
Irrespective of the underlying generative model, using more samples generally improves robustness.
Samples from the \gls{ddpm} are more useful, as can be seen from the higher robust accuracy obtained on the \cifar test set (i.e., 58.43\% versus 43.34\% with a \wrn-28-10).
It is also worth noting that using samples from the \gls{ddpm} results in a smaller generalization gap of 5.74 points when using 1M samples (gap between the dashed and solid blue lines).
Models trained on BigGAN samples tend to overfit to these samples, which results in a large generalization gap of 51.61 points (gap between the dashed and solid orange lines).
These results also confirm that BigGAN samples are easier to robustly classify (possibly due to their low diversity).

\paragraph{\cifarh.}

For completeness, we also report the effect of mixing different proportions of generated and original samples in \autoref{fig:cifarh_ratio} against $\epsilon_\infty = 8/255$ using a \wrn-28-10 on \cifarh.
Similarly to \autoref{fig:generative_ratio}, we observe that additional samples generated by \gls{ddpm} are useful to improve robustness, with an absolute improvement of +2.48\% in robust accuracy.

\begin{figure}[t]
\centering
\begin{minipage}{.45\textwidth}
  \centering
  \includegraphics[width=\textwidth]{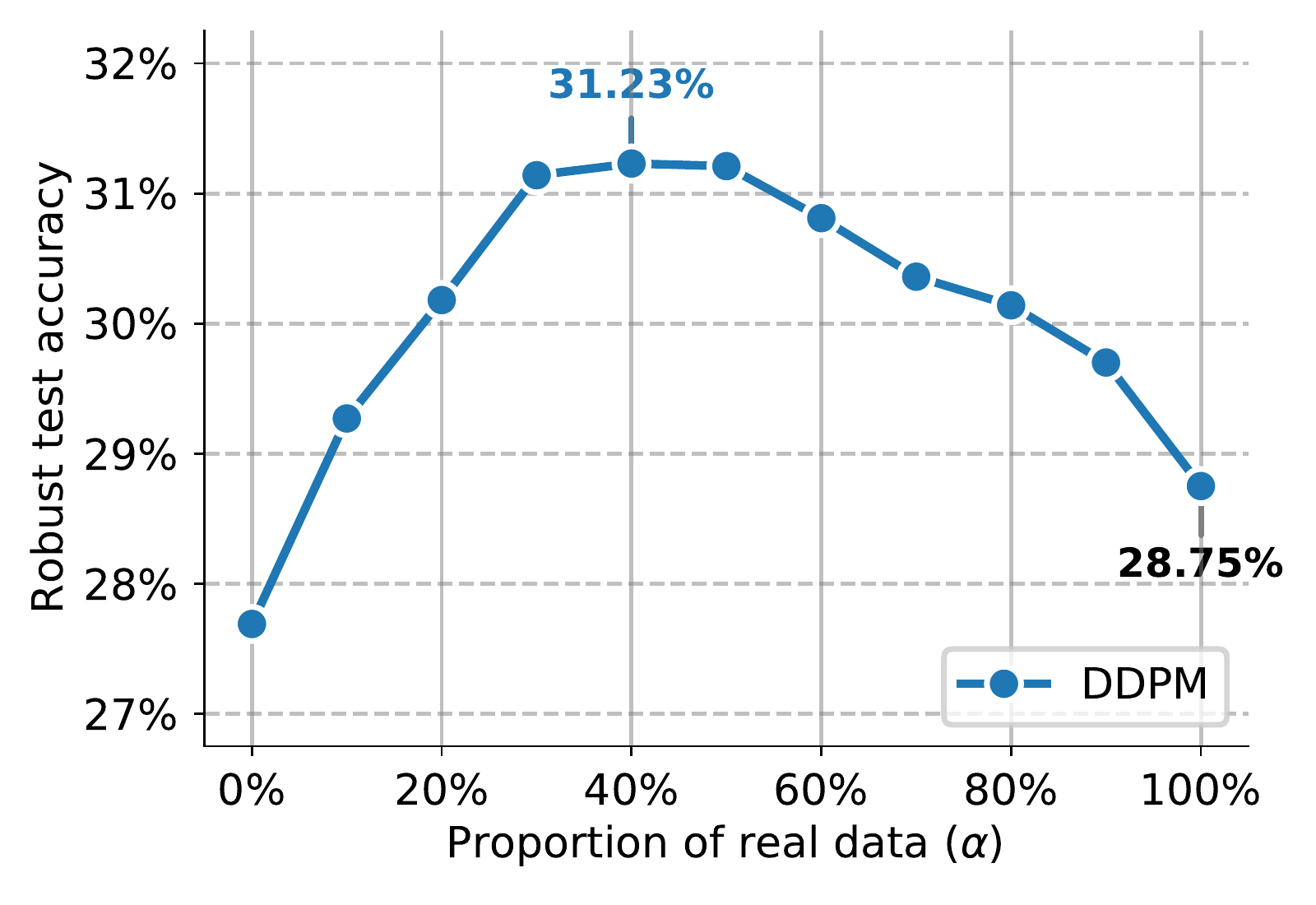}
  \captionof{figure}{Robust test accuracy when training a \wrn-28-10 against $\epsilon_\infty = 8/255$ on \cifarh with additional data produced by a \gls{ddpm}. We compare how the ratio between original images and generated images in the training minibatches affects the test robust performance (0\% means generated samples only, while 100\% means original \cifarh train set only). \label{fig:cifarh_ratio}}
\end{minipage}\hfill
\begin{minipage}{.45\textwidth}
  \centering
  \includegraphics[width=\textwidth]{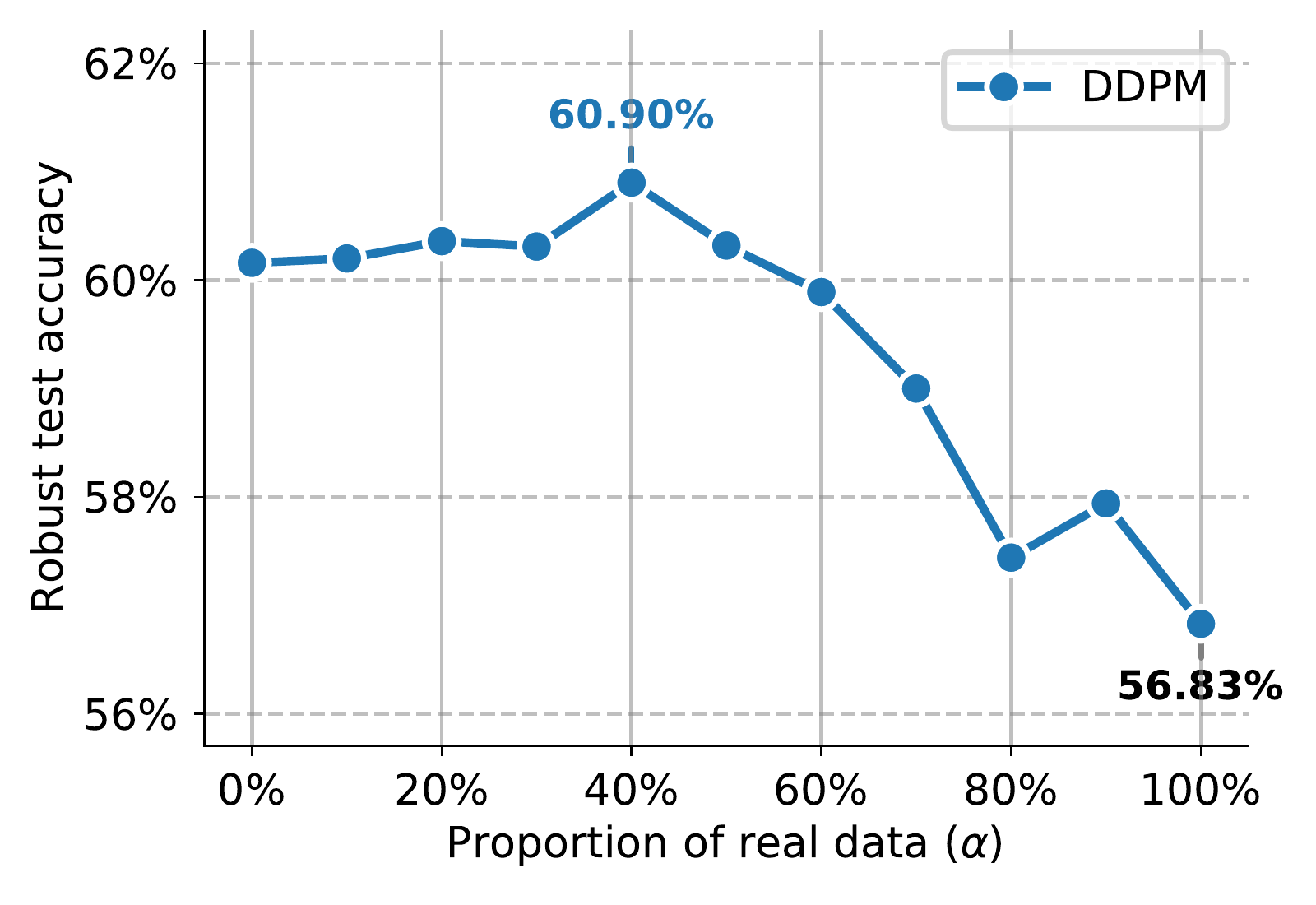}
  \captionof{figure}{Robust test accuracy when training a \wrn-28-10 against $\epsilon_\infty = 8/255$ on \svhn with additional data produced by a \gls{ddpm}. We compare how the ratio between original images and generated images in the training minibatches affects the test robust performance (0\% means generated samples only, while 100\% means original \svhn train set only). \label{fig:svhn_ratio}}
\end{minipage}
\end{figure}

\begin{figure}[t]
\centering
  \begin{minipage}[c]{0.45\textwidth}
    \includegraphics[width=\textwidth]{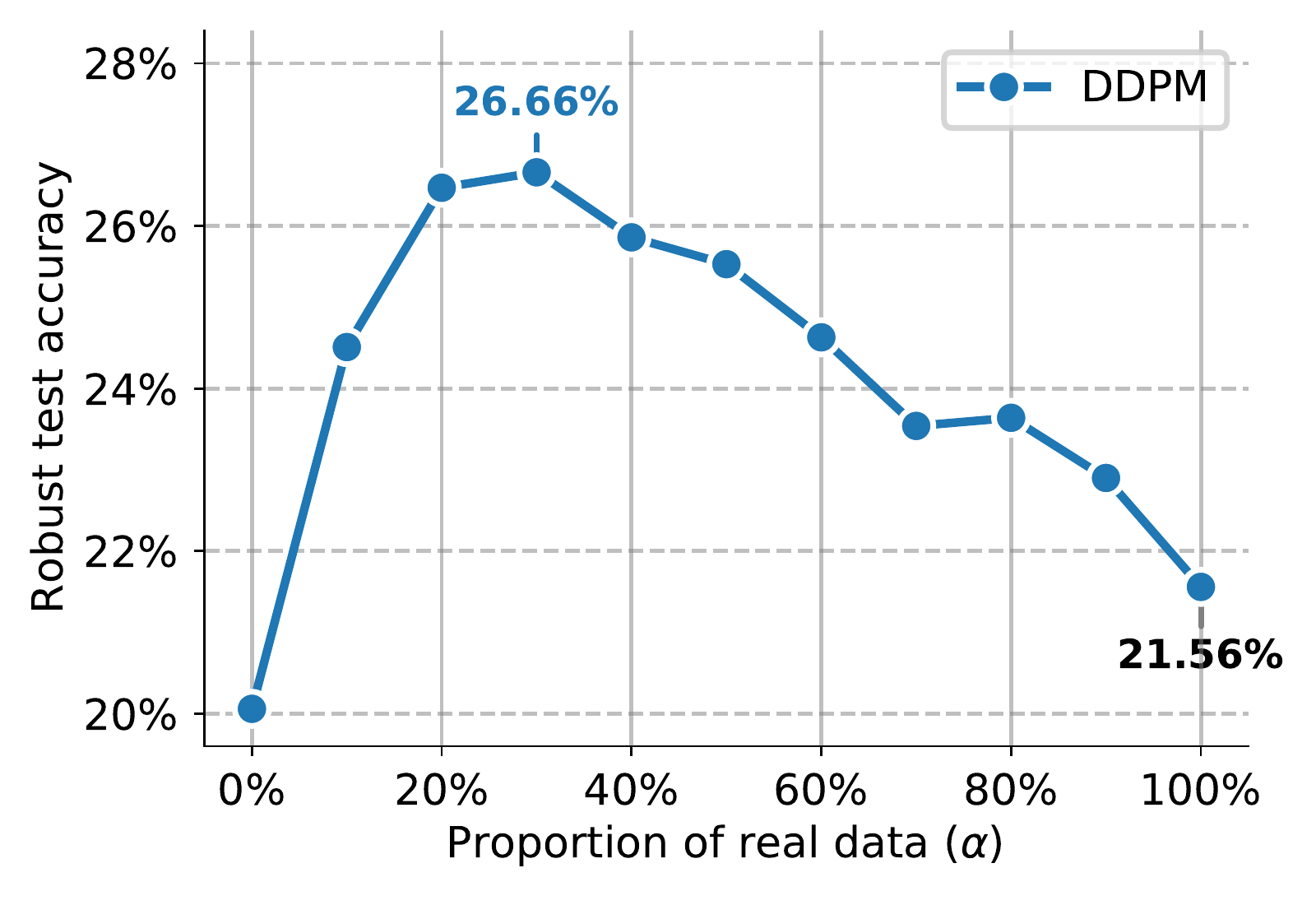}
  \end{minipage}\hfill
  \begin{minipage}[c]{0.45\textwidth}
    \caption{Robust test accuracy when training a \wrn-28-10 against $\epsilon_\infty = 8/255$ on \tinyimagenet with additional data produced by a \gls{ddpm} trained on \imagenet. We compare how the ratio between original images and generated images in the training minibatches affects the test robust performance (0\% means generated samples only, while 100\% means original \tinyimagenet train set only). \label{fig:tinyimagenet_ratio}}
  \end{minipage}
\end{figure}

\paragraph{\svhn.}

We report the effect of mixing different proportions of generated and original samples in \autoref{fig:svhn_ratio} against $\epsilon_\infty = 8/255$ using a \wrn-28-10 on \svhn.
Similarly to \autoref{fig:generative_ratio} and \autoref{fig:cifarh_ratio}, we observe that additional samples generated by \gls{ddpm} are useful to improve robustness, with an absolute improvement of +4.07\% in robust accuracy.

\paragraph{\tinyimagenet.}

We report the effect of mixing different proportions of generated and original samples in \autoref{fig:tinyimagenet_ratio} against $\epsilon_\infty = 8/255$ using a \wrn-28-10 on \tinyimagenet.
Similarly to \autoref{fig:generative_ratio}, \autoref{fig:cifarh_ratio} and \autoref{fig:svhn_ratio}, we observe that additional samples generated by \gls{ddpm} are useful to improve robustness, with an absolute improvement of +5.10\% in robust accuracy.

\paragraph{\imagenet.}

To obtain the results in \autoref{table:results} for \imagenet, we train a ResNet-152 with SiLU/Swish activations, with and without 5M additional DDPM samples against $\epsilon_\infty = 4/255$.
We use standard adversarial training \citep{madry_towards_2017} for 200 epochs with cosine learning rate schedule and weight averaging ($\tau = .995)$ and set the proportion $\alpha$ of real data to 80\%.
We use a batch size of $1024$ split over $128$ Google Cloud TPUv3 cores~\citep{kumar2019scale}.
The robust accuracy reported is obtained by running \textsc{AA+MT} where we limit the targeted attacks to the top-10 non-correct class predictions.
For 200 epochs of training, we obtain 44.44\% and 45.30\% without and with 5M additional DDPM samples, respectively.
When training for 100 epochs, we obtain 40.04\% and 42.31\% without and with 5M additional DDPM samples, respectively (with 64.79\% and 66.09\% in clean accuracy).
It is worth noting that when training a ResNet-50, DDPM samples do not provide a significant improvement -- with both models reaching 42.57\% and 42.75\% robust accuracy after 200 epochs without and with DDPM samples, respectively (with 67.40\% and 67.84\% in clean accuracy).
This observation is in line with the theory elaborated in \autoref{sec:conditions}.

\paragraph{Clean accuracy.}

Finally, the clean accuracy (i.e., accuracy obtained when no perturbation is applied to the input) of all models used in \autoref{fig:generative_ratio} is reported in \autoref{fig:clean_accuracy}.
All these models are trained adversarially to be robust against $\epsilon_\infty = 8/255$ on \cifar.
We observe that improvements in robust accuracy are not always correlated (either positively or negatively) with improvements in clean accuracy.
While \gls{vdvae} samples provide no improvements in clean accuracy, using BigGAN, StyleGAN2 or \gls{ddpm} samples can improve clean accuracy by up to +1.27\%, +3.45\% and +2.05\%, respectively.

\begin{figure}[t]
\centering
  \begin{minipage}[c]{0.45\textwidth}
    \includegraphics[width=\textwidth]{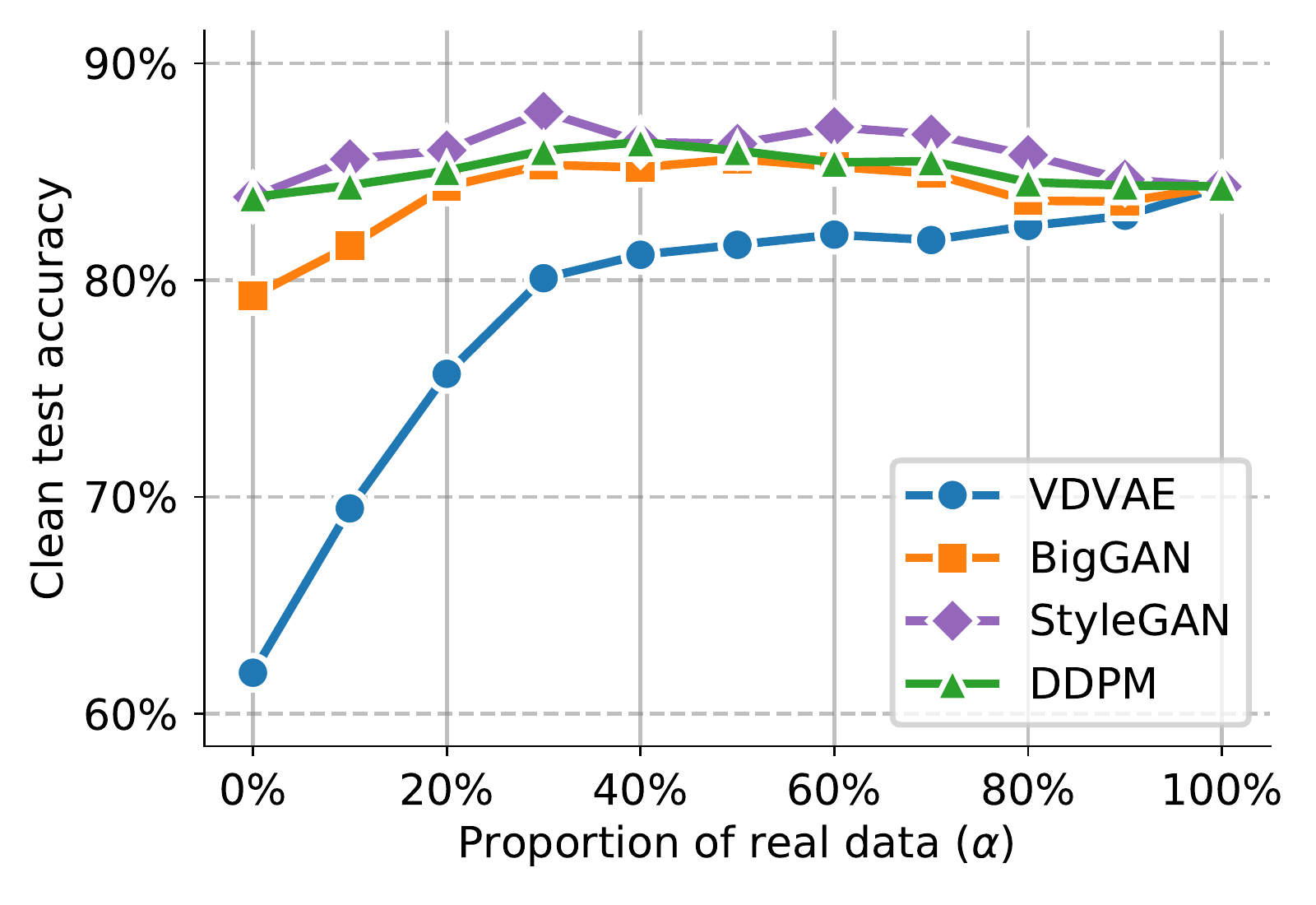}
  \end{minipage}\hfill
  \begin{minipage}[c]{0.45\textwidth}
    \caption{Clean test accuracy obtained by training a \wrn-28-10 against $\epsilon_\infty = 8/255$ on \cifar when using additional data produced by different generative models. We compare how the ratio between original and generated images (i.e., $\alpha$) affects the clean accuracy (0\% means generated samples only, 100\% means \cifar train set only). The robust test accuracy for the same models is shown in \autoref{fig:generative_ratio} in the main manuscript. \label{fig:clean_accuracy}}
  \end{minipage}
\end{figure}

\section{Analysis of models}

In this section, we perform additional diagnostics that give us confidence that our models are not doing any form of gradient obfuscation or masking \citep{athalye_obfuscated_2018,uesato_adversarial_2018}.

\paragraph{\autoattack and robustness against black-box attacks.}

First, we report in \autoref{table:autoattack} the robust accuracy obtained by our strongest models against a diverse set of attacks.
These attacks are run as a cascade using the \autoattack library available at \url{https://github.com/fra31/auto-attack}.
First, we observe that our combination of attacks, denoted \textsc{AA+MT} matches the final robust accuracy measured by \autoattack.
Second, we also notice that the black-box attack (i.e., \textsc{Square}) does not find any additional adversarial examples.
Overall, these results indicate that our empirical measurement of robustness is meaningful and that our models do not obfuscate gradients.

\begin{table}[h]
\caption{Clean (without adversarial attacks) accuracy and robust accuracy (against the different stages of \autoattack) on \cifar obtained by different models. Refer to \url{https://github.com/fra31/auto-attack} for more details.\label{table:autoattack}}
\begin{center}
\resizebox{1.\textwidth}{!}{
\begin{tabular}{l|ccc|cccc|cc}
    \hline
    \cellcolor{header} \textsc{Model} & \cellcolor{header} \textsc{Dataset} & \cellcolor{header} \textsc{Norm} & \cellcolor{header} \textsc{Radius} & \cellcolor{header} \textsc{AutoPGD-ce} & \cellcolor{header} + \textsc{AutoPGD-t} & \cellcolor{header} + \textsc{Fab-t} & \cellcolor{header} + \textsc{Square} & \cellcolor{header} \textsc{Clean} & \cellcolor{header} \textsc{AA+MT} \TBstrut \\
    \hline
    \wrn-28-10 (\gls{ddpm}) & \multirow{3}{*}{\cifar} & \multirow{3}{*}{\linf} & \multirow{3}{*}{$\epsilon = 8/255$} & 63.53\% & 60.73\% & 60.73\% & 60.73\% & 85.97\% & 60.73\% \Tstrut \\
    \wrn-70-16 (\gls{ddpm}) & & & & 65.95\% & 63.62\% & 63.62\% & 63.62\% & 86.94\% & 63.58\% \\
    ResNet-18 (100M \gls{ddpm}) & & & & 60.85\% & 58.63\% & 58.63\% & 58.63\% & 87.35\% & 58.50\% \\
    \wrn-28-10 (100M \gls{ddpm}) & & & & 65.65\% & 63.44\% & 63.44\% & 63.44\% & 87.50\% & 63.38\% \\
    \wrn-70-16 (100M \gls{ddpm}) & & & & 68.46\% & 66.13\% & 66.11\% & 66.11\% & 88.74\% & 66.10\% \Bstrut \\
    \hline
    \wrn-28-10 (\gls{ddpm}) & \multirow{2}{*}{\cifar} & \multirow{2}{*}{\ltwo} & \multirow{2}{*}{$\epsilon = 128/255$} & 78.13\% & 77.44\% & 77.44\% & 77.44\% & 90.24\% & 77.37\% \Tstrut \\
    \wrn-70-16 (\gls{ddpm}) & & & & 78.97\% & 78.39\% & 78.39\% & 78.39\% & 90.93\% & 78.31\% \Bstrut \\
    \hline
    \wrn-28-10 (\gls{ddpm}) & \multirow{2}{*}{\cifarh} & \multirow{2}{*}{\linf} & \multirow{2}{*}{$\epsilon = 8/255$} & 34.47\% & 30.81\% & 30.81\% & 30.81\% & 59.18\% & 31.23\% \Tstrut \\
    \wrn-70-16 (\gls{ddpm}) & & & & 36.27\% & 33.49\% & 33.49\% & 33.49\% & 60.46\% & 33.93\% \Bstrut \\
    \hline
\end{tabular}
}
\end{center}
\end{table}

\paragraph{Loss landscapes.}

We analyze the adversarial loss landscapes of our best model trained on \cifar against $\epsilon_\infty = 8/255$ (a \wrn-70-16).
To generate a loss landscape, we vary the network input along the linear space defined by the worse perturbation found by \pgd{40} ($u$ direction) and a random Rademacher direction ($v$ direction).
The $u$ and $v$ axes represent the magnitude of the perturbation added in each of these directions respectively and the $z$ axis is the adversarial margin loss~\citep{carlini_towards_2017}: $z_y - \max_{i \neq y} z_i$ (i.e., a misclassification occurs when this value falls below zero).
\autoref{fig:linf_landscapes} shows the loss landscapes around the first 2 images of the \cifar test set for the aforementioned model.
Both landscapes are smooth and do not exhibit patterns of gradient obfuscation.
Overall, it is difficult to interpret these figures further, but they do complement the numerical analyses done so far.

\begin{figure*}[t]
\centering
\subfigure[Image of a horse]{\includegraphics[width=0.45\textwidth]{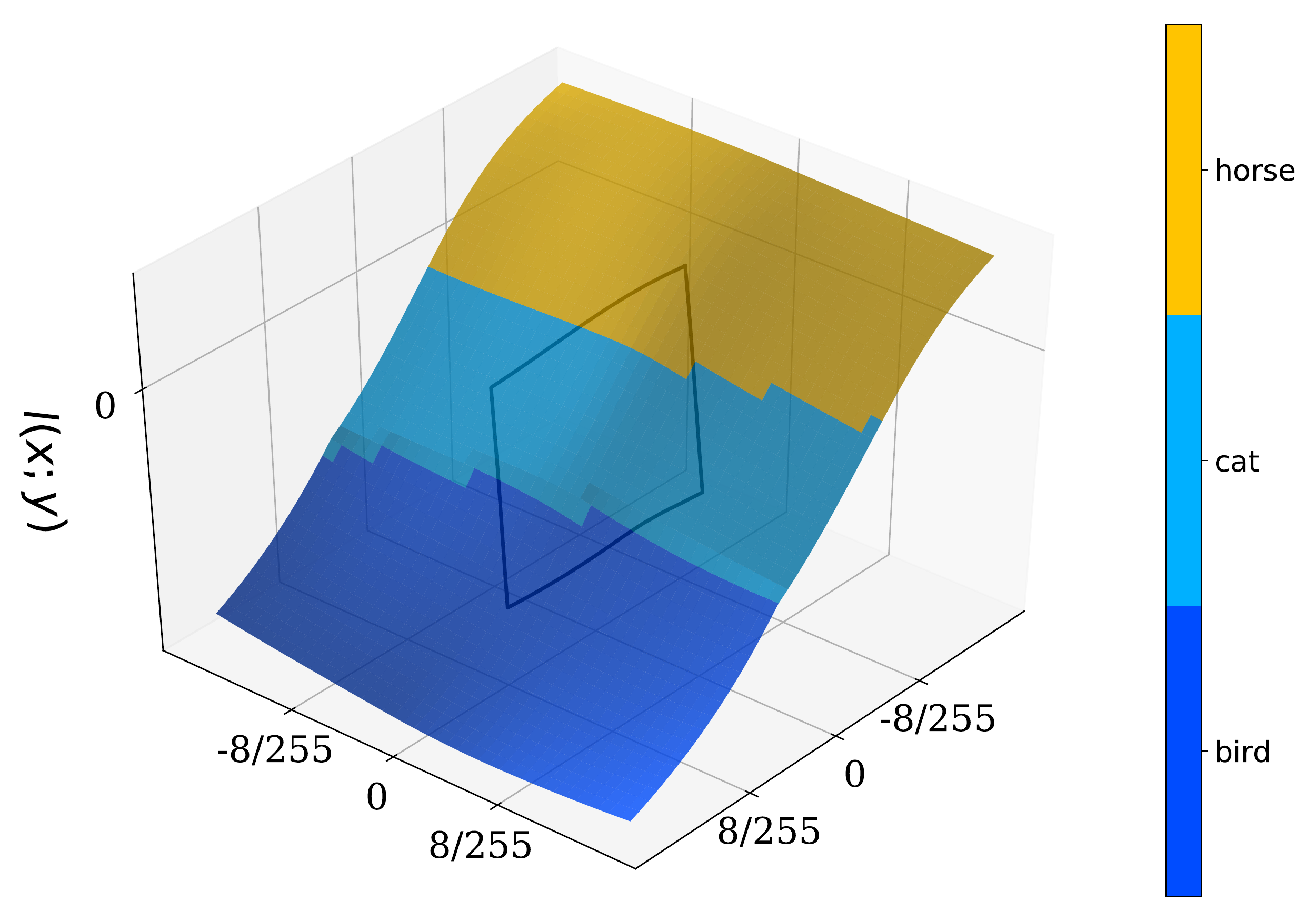}}
\subfigure[Image of an airplane]{\includegraphics[width=0.45\textwidth]{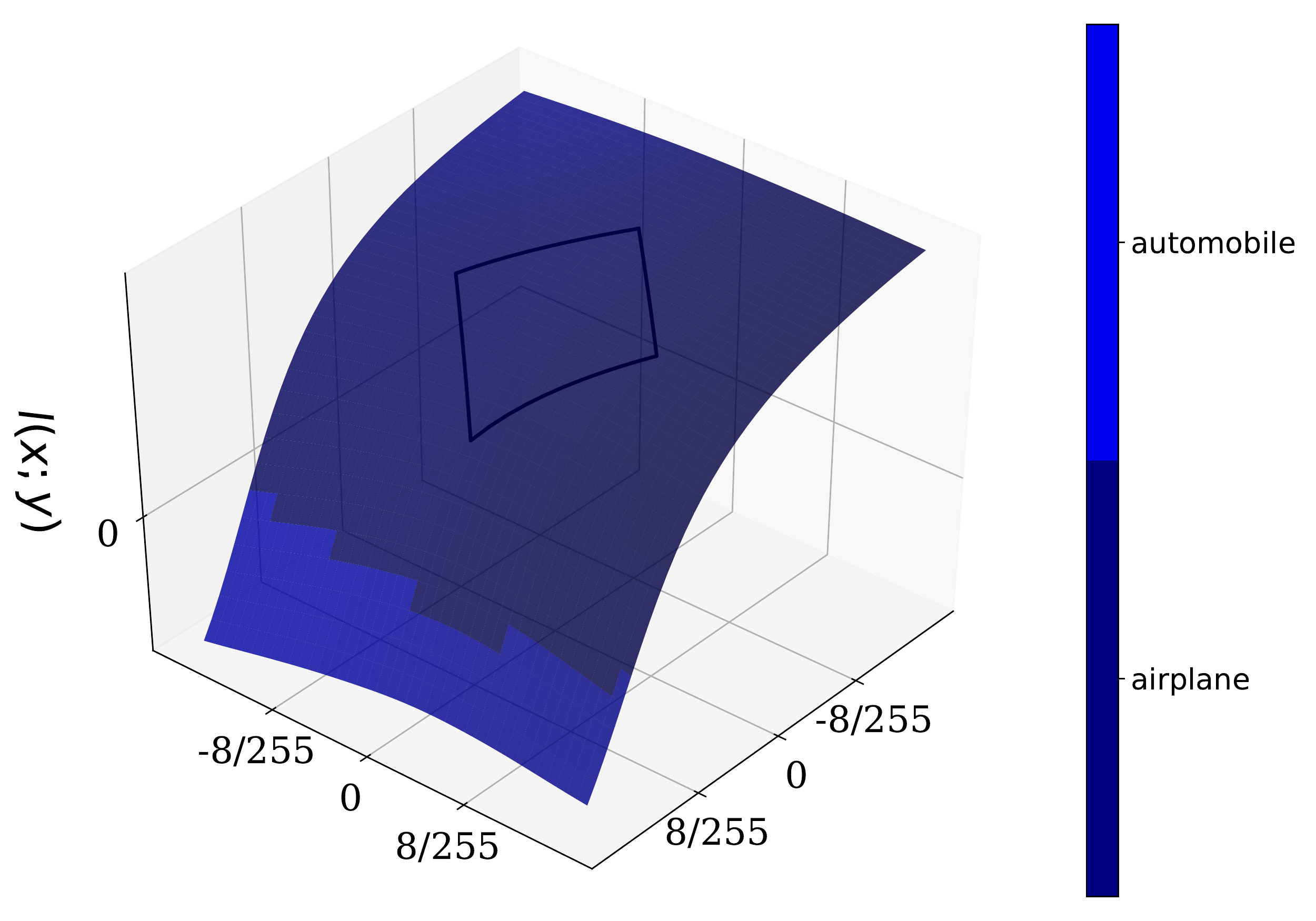}}
\caption{Loss landscapes around the first two images from the \cifar test set for the \wrn-70-16 networks trained with \gls{ddpm} samples.
It is generated by varying the input to the model, starting from the original input image toward either the worst attack found using \pgd{40} ($u$ direction) or a random Rademacher direction ($v$ direction). The loss used for these plots is the margin loss $z_y - \max_{i \neq y} z_i$ (i.e., a misclassification occurs when this value falls below zero). The diamond-shape represents the projected \linf ball of size $\epsilon = 8/255$ around the nominal image.}
\label{fig:linf_landscapes}
\end{figure*}

\section{Details on generated data}
\label{sec:details_data_augment}
\label{sec:generated_samples}

\paragraph{Generative models.}

In this paper, we use four different and complementary generative models: \textit{(i)} BigGAN~\citep{brock2018large}, \textit{(ii)} \gls{vdvae}~\cite{child2021vdvae}, \textit{(iii)} StyleGAN2~\cite{karras2020analyzing} and \textit{(iv)} \gls{ddpm}~\cite{ho2020denoising}.
Except for BigGAN, we use the \cifar checkpoints that are available online.
For BigGAN, we train our own model and pick the model that achieves the best \gls{fid} (the model architecture and training schedule is the same as the one used in \citep{brock2018large}).
All models are trained solely on the \cifar train set (or the train set of \cifarh or \svhn for experiments shown in \autoref{sec:additional_results}).
As a baseline, we also fit a class-conditional multivariate Gaussian, which reaches \gls{fid} and \gls{is} metrics of 117.62 and 3.64, respectively.
We also report that BigGAN reaches an \gls{fid} of 11.07 and \gls{is} of 9.71; \gls{vdvae} reaches an \gls{fid} of 36.88 and \gls{is} of 6.03; StyleGAN2 reaches an \gls{fid} of 2.57 and \gls{is} of 10.04 and \gls{ddpm} reaches an \gls{fid} of 3.15 and \gls{is} of 9.50.\footnote{For \cifarh, we trained our own \gls{ddpm} which achieves an \gls{fid} of 5.58 and \gls{is} of 10.82.}\footnote{For \svhn, we trained our own \gls{ddpm} which achieves an \gls{fid} of 4.89 and \gls{is} of 3.06.}\footnote{For \tinyimagenet, we used the class-conditional \gls{ddpm} checkpoint available at \url{https://github.com/openai/guided-diffusion} which has been trained on \imagenet at a $64\times64$ resolution.}

\paragraph{Datasets of generated samples.}

For the class-conditional multivariate Gaussian and \gls{ddpm} samples, we use a pretrained \wrn-28-10 to give pseudo-labels.
This \wrn-28-10 is trained non-robustly on the \cifar train set and achieves 95.68\% accuracy.\footnote{For \cifarh, the same model achieves 79.98\% accuracy.}\footnote{For \svhn, the same model achieves 96.54\% accuracy.}\footnote{For \tinyimagenet, we use a class-conditional StyleGAN2 model and do not need to train a non-robust classifier for pseudo-labeling.}
We sample images from each model until we have 100K images for each class.\footnote{We use 10K images per class for \cifarh experiments.}
For BigGAN and \gls{vdvae} samples, we proceed with an additional filtering step similarly to the one proposed by \citet{carmon_unlabeled_2019}.
We sample from each generative model 5M images and score each image using the pretrained, non-robust \wrn-28-10 model used for pseudo-labeling.
For each class, we select the top-100K scoring images and build a dataset of 1M image-label pairs.\footnote{All generated datasets are available online at \githuburl.}

This additional generated data (consisting of 1M samples) is used to train adversarially robust models by mixing in each batch a given proportion of original and generated examples.
\autoref{fig:samples} shows a random subset of this additional data for each generative model.
We also report the \gls{fid} and \gls{is} metrics of the resulting sets in \autoref{table:similarity_train_test_self_gen} and \autoref{table:similarity_train_test_self_gen_lpips}.
They might differ from metrics obtained by each generative model individually (see previous paragraph) as we filter images to either keep the highest scoring ones or make sure that classes are balanced.

\paragraph{Class-conditional Gaussian-fit.}

To generate images from the class-conditional Gaussian-fit, we use the following procedure: (i) we take 5K images for each class in the training set and do a principal component analysis (PCA) to reduce the dimensionality to 200 coordinates; (ii) in the reduced space, we compute the mean and covariance for each class; (iii) to generate an image from a given class, we sample from the corresponding Gaussian distribution (determined by its mean and covariance) and reverse the PCA transformation. The exact code is given below:
\lstinputlisting[language=Python]{code/gaussian.py}

\paragraph{Diversity and complementarity.}

While the \gls{fid} metric does capture how two distributions of samples match, it does not necessarily provide enough information in itself to assess the overlap between the distribution of generated samples and the train or test distributions (this is especially true for samples obtained through data augmentations such as \emph{mixup}) -- as seen in \autoref{table:similarity_train_test_self_gen} and explained in the next paragraph.
As such, we also decide to compute the proportion of nearest neighbors in perceptual space: given equal Inception metrics, a better generative model would produce samples that are equally likely to be close to training, testing or generated images.
In an attempt to estimate the coverage of the real data distribution, we also compute the proportion of nearest neighbors that are unique: given equal Inception metrics, a better generative model would produce samples that are equally likely to be close to any image in train or test set (thus resulting in a high proportion of unique neighbors).\footnote{As a point of comparison, sampling 2 sets of $10$K points from a uniform distribution $\mathcal{U}_{[0, 1]}$ between 0 and 1 yields in average a proportion of unique nearest neighbors equal to 55.6\%.}

We now describe how we compute \autoref{table:similarity_train_test_self_gen} which reports nearest-neighbors statistics for the different generative models.
First, we sample 10K images from the train set of \cifar (uniformly across classes) and take the full test set of \cifar.
We then pass these 20K images through the pre-trained Inception network (used to measure Inception metrics).
We use the activations from the last pooling operation and compute their top-100 PCA components, as this allows us to compare samples in a much lower dimensional space (i.e., 100 instead of 2048).
Finally, for each generative model, we sample 10K images from their 1M dataset (class-balanced as well) and pass them through the pipeline composed of the Inception network and the PCA projection computed on the original data.
The left-most three columns (entitled ``\uline{complementarity}'') are computed by finding, for each generated sample, its closest neighbor in the PCA-reduced feature space to any image from the set of $30\textrm{K}-1$ images composed of train, test and generated sets.
We then measure whether this nearest-neighbor belongs to the original datasets of 10K image each (train or test) or to the generated set (self) composed of the remaining $10\textrm{K}-1$ images.
For example, given 6 generated samples (instead of the 10K), the first sample's closest neighbor could be in the train set, the next two samples' closest neighbors could in the test set and the last three samples' closest neighbors could be in the set of generated samples. This would result in ratios of $1/6$, $1/3$ and $1/2$.
The middle set of two columns (entitled ``\uline{coverage}'') is computed by finding, for each generated sample, its closest neighbor in the PCA-reduced feature space to any image from the train and test sets.
We then measure the number of unique neighbors matched in both sets.\footnote{According to this measure, the \cifar train set covers 52.27\% of the test set, while the test set covers 52.08\% of the train set. Obtaining a significantly higher coverage of the train set is likely the result of overfitting and memorization.}
See Alg.~\ref{alg:neighbor} for pseudo-code.

\begin{algorithm}[h]
\footnotesize
\caption{Complementarity and coverage computation}
\begin{algorithmic}[1]
  \Require Train set $\train$, test set $\test$, distribution $\generateddata$ for which we measure complementary and coverage, number of samples $N$ and a function $g: \R^n \mapsto \R^m$ that maps inputs to their features (e.g., Inception features).
  \Ensure Complementarity $\{c_\textrm{train}, c_\textrm{test}, c_\textrm{self}\}$ and coverage $\{v_\textrm{train}, v_\textrm{test}\}$.
  \State $\gD_\textrm{self} \gets \{\vx_i \sim \generateddata \}_{i=1}^N$ \Comment{Pick $N$ samples from $\generateddata$}
  \State $\bar{\gD}_\textrm{train}$ is such that $\bar{\gD}_\textrm{train}\subseteq\train$ and $|\bar{\gD}_\textrm{train}| = N$ \Comment{Pick $N$ samples from $\train$}
  \State $\bar{\gD}_\textrm{test}$ is such that $\bar{\gD}_\textrm{test}\subseteq\train$ and $|\bar{\gD}_\textrm{test}| = N$ \Comment{Pick $N$ samples from $\test$}
  \State $c_\textrm{train} \gets 0, c_\textrm{test} \gets 0, c_\textrm{self} \gets 0$ \Comment{Initialize complementarity counters}
  \State $\gV_\textrm{train} \gets \emptyset, \gV_\textrm{test} \gets \emptyset$ \Comment{Initialize coverage sets}
  \For{$\vx_i \in \gD_\textrm{self}$} \Comment{For all generated samples}
    \State $\bar{\gD}_\textrm{self} = \gD_\textrm{self} \setminus \{\vx_i\}$ \Comment{Ignore current sample in computation below}
    \State $s^\star = \argmin_{s \in \{\textrm{train}, \textrm{test}, \textrm{self}\}} \min_{\vx'_i \in \bar{\gD}_s} \| g(\vx_i) - g(\vx'_i) \|_2$ \Comment{Find closest set}
    \State $c_{s^\star} \gets c_{s^\star} + 1 / N$ \Comment{Increment counter of closest set}
    \State $\gV_\textrm{train} \gets \gV_\textrm{train} \cup \{ \argmin_{ \vx'_i \in \bar{\gD}_\textrm{train}} \| g(\vx_i) - g(\vx'_i) \|_2 \}$ \Comment{Find closest neighbor in train set}
    \State $\gV_\textrm{test} \gets \gV_\textrm{test} \cup \{ \argmin_{ \vx'_i \in \bar{\gD}_\textrm{test}} \| g(\vx_i) - g(\vx'_i) \|_2 \}$ \Comment{Find closest neighbor in test set}
  \EndFor
  \State $v_\textrm{train} = |\gV_\textrm{train}| / N$, $v_\textrm{test} = |\gV_\textrm{test}| / N$ \Comment{Compute coverage ratio}
\end{algorithmic}
\label{alg:neighbor}
\end{algorithm}

In \autoref{table:similarity_train_test_self_gen_lpips}, we repeat the process used for \autoref{table:similarity_train_test_self_gen} for a subset of its rows by using the pretrained VGG network which measures a Perceptual Image Patch Similarity, also known as LPIPS \citep{zhang2018unreasonable}, instead of the Inception network.
We use the resulting 124,928 concatenated activations and compute their top-100 PCA components.
Overall, the resulting numbers are similar to the ones obtained by the Inception network.
In \autoref{table:similarity_train_test_self_gen_aug}, we use Inception features to compute the complementarity and coverage metrics of various data augmentation schemes.
All augmentation schemes produce samples that are too close to the train set and too far from the test set, which indicates that when they provide samples that could complement the train set, these samples are far from the true distribution.

\begin{figure*}[h]
\centering
\subfigure[Conditional Gaussian]{\includegraphics[width=0.45\textwidth]{images/generated/random.pdf}}
\subfigure[\gls{vdvae}]{\includegraphics[width=0.45\textwidth]{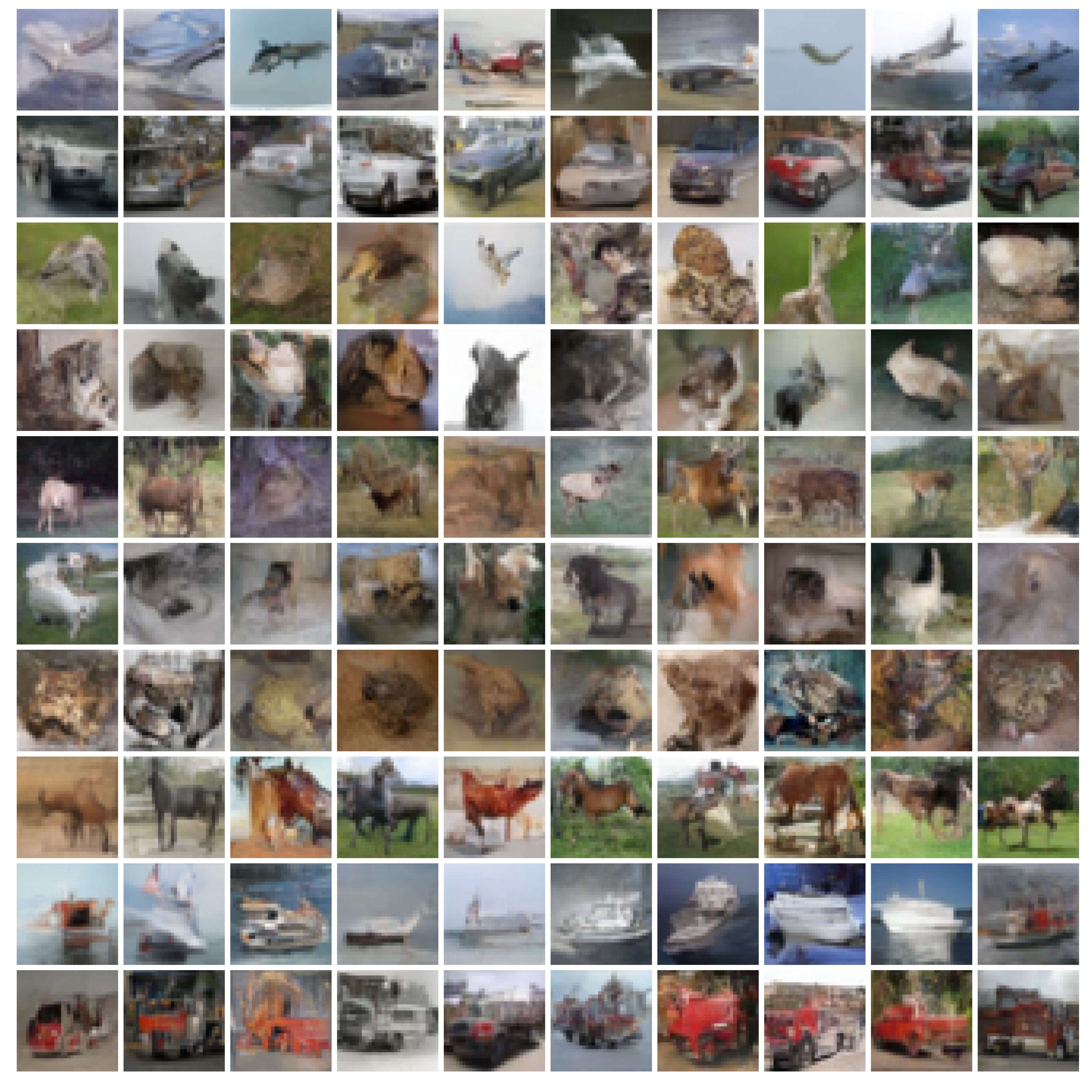}}
\subfigure[BigGAN]{\includegraphics[width=0.45\textwidth]{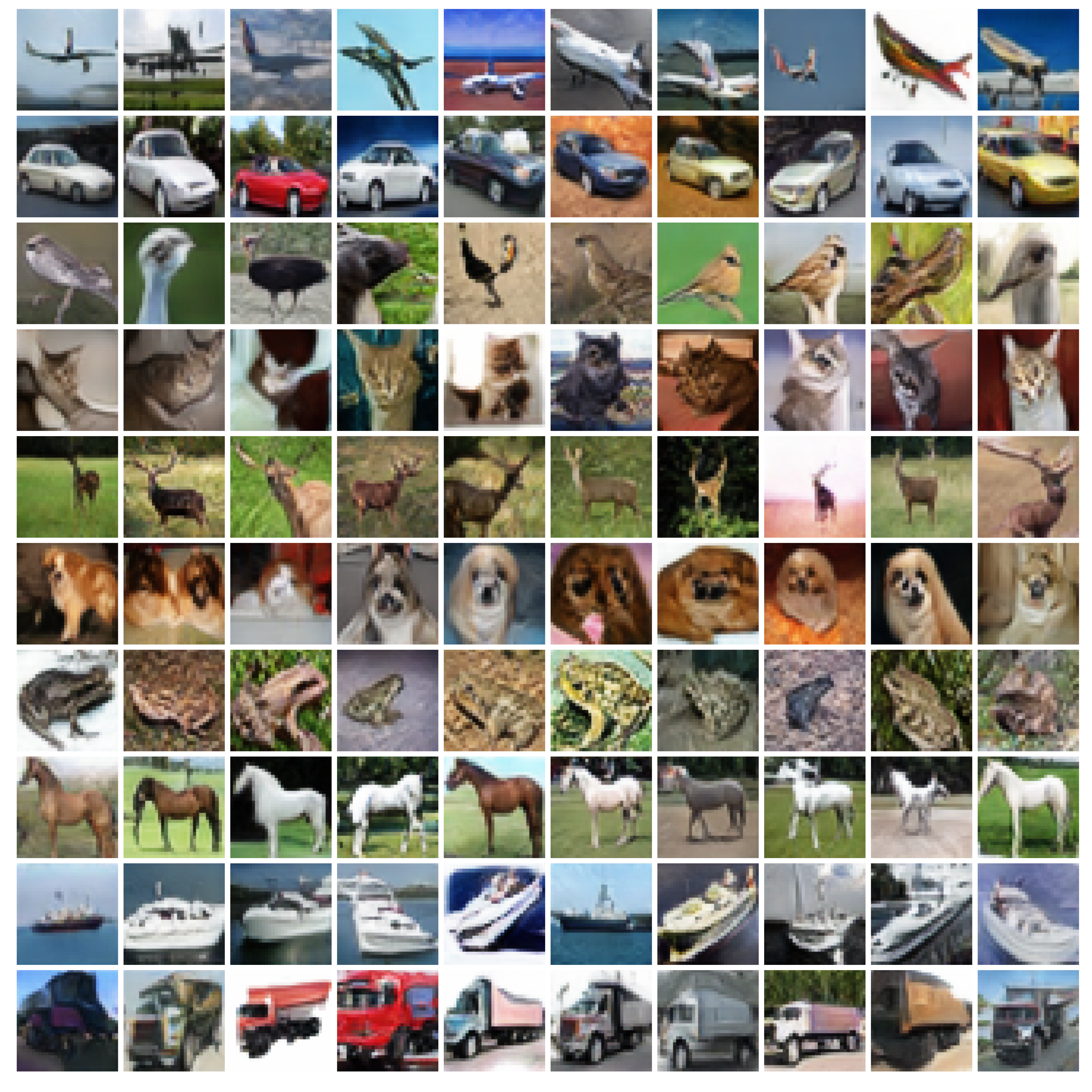}}
\subfigure[StyleGAN]{\includegraphics[width=0.45\textwidth]{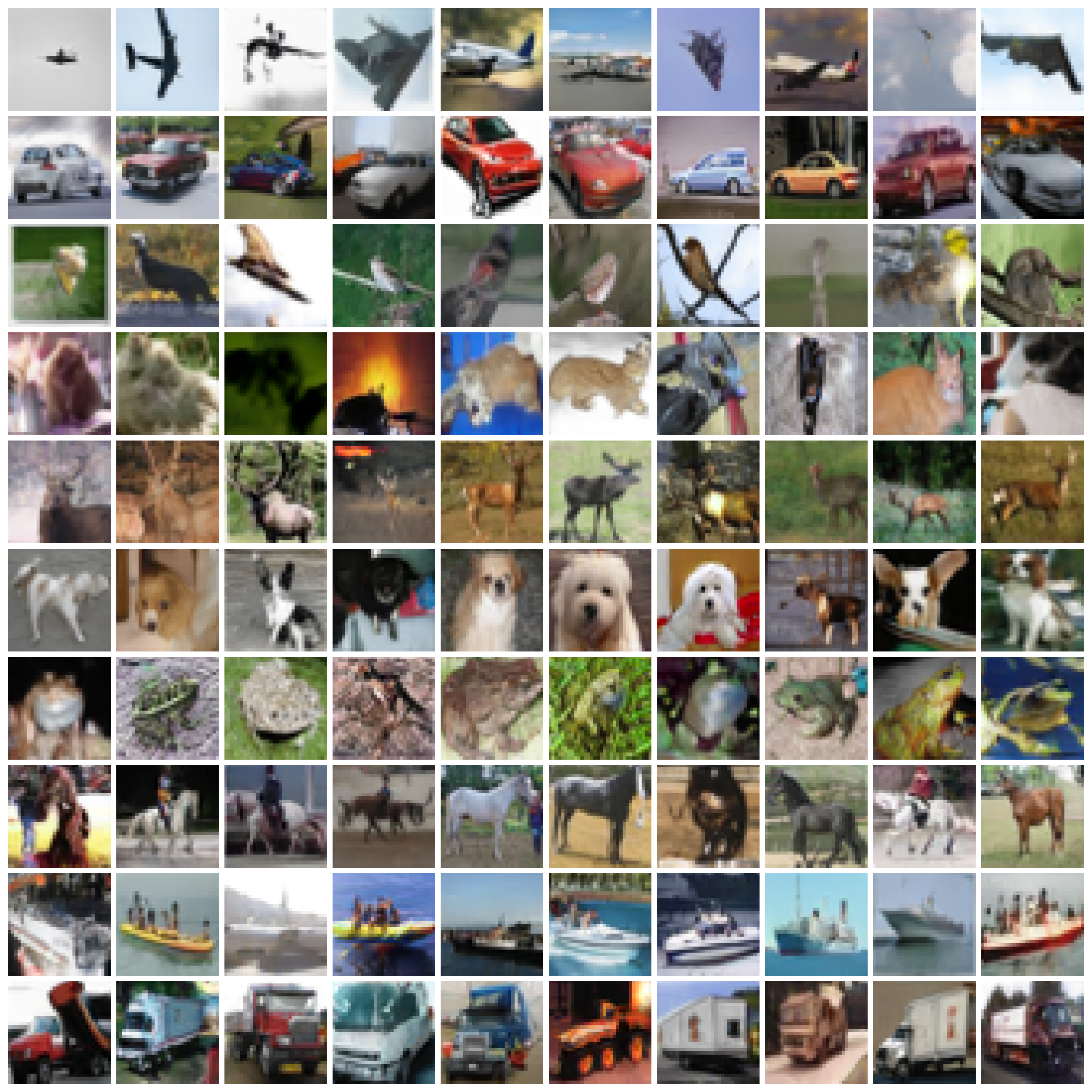}}
\subfigure[\gls{ddpm}]{\includegraphics[width=0.45\textwidth]{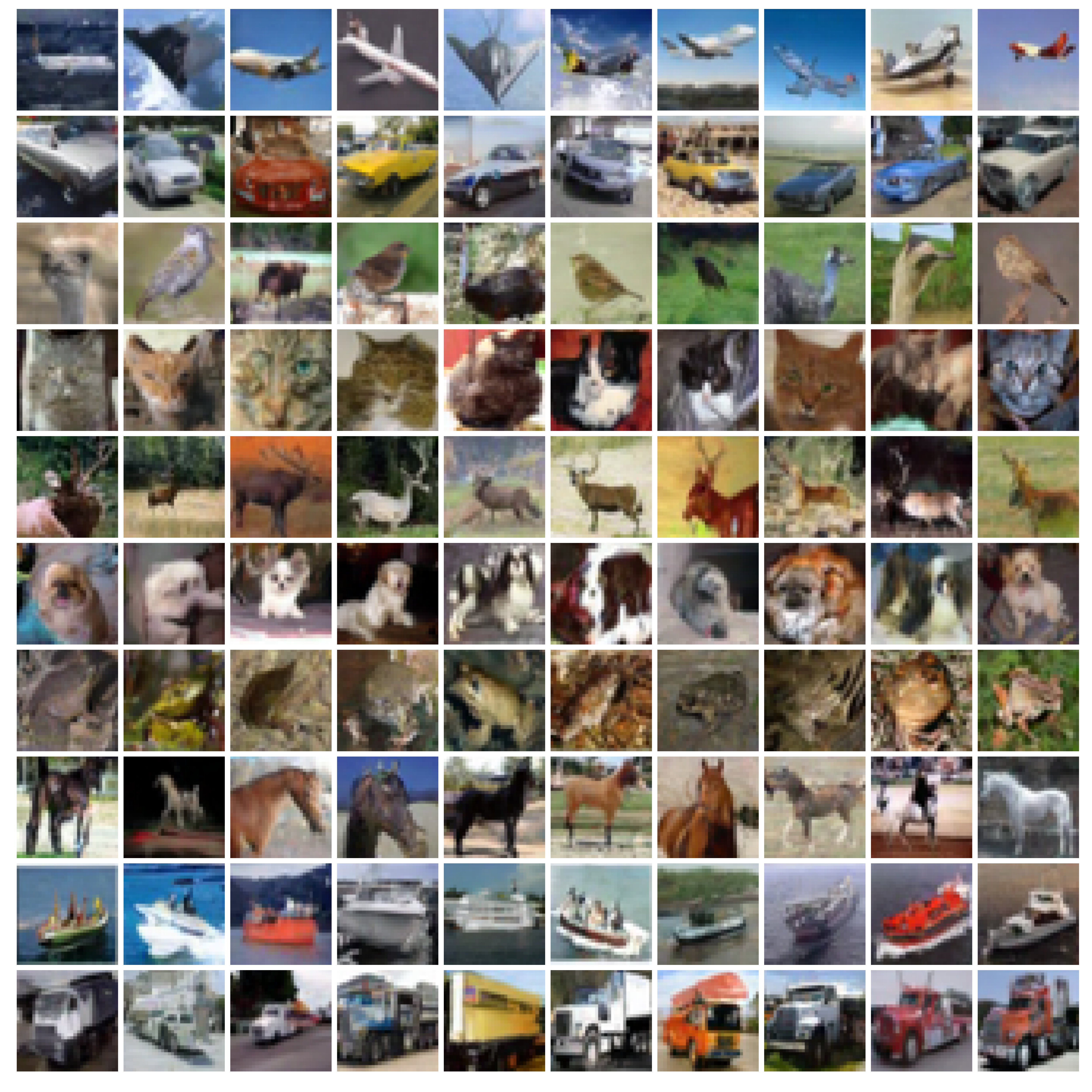}}
\caption{\cifar samples generated by different approaches and used as additional data to train adversarially robust models. Each row correspond to a different class in the following order: airplane, automobile, bird, cat, deer, dog, frog, horse, ship, truck. Each image is assigned a \emph{pseudo-label} using a standard classifier trained on the \cifar train set.}
\label{fig:samples}
\end{figure*}

\begin{table}[t]
\caption{
Complementarity and coverage of augmented and generated samples.
We sample 10K images from the train set and various different generative models.
For each sample in each set, we find its closest neighbor in LPIPS feature space.
To estimate \uline{complementarity}, we report the proportion of samples with a nearest neighbor in either the train set, test set or the sampled set itself.
To estimate \uline{coverage}, we report the proportion of unique neighbors in the train and test set.
We also include the IS and FID computed from 50K samples from each set.
\label{table:similarity_train_test_self_gen_lpips}}
\begin{center}
\resizebox{.85\textwidth}{!}{
\begin{tabular}{l|ccc|cc|cc}
    \hline
    \cellcolor{header} & \multicolumn{3}{c|}{\cellcolor{header} \textsc{Complementarity}} & \multicolumn{2}{c|}{\cellcolor{header} \textsc{Coverage}}  & \multicolumn{2}{c}{\cellcolor{header} \textsc{Inception Metrics}} \TBstrut \\
    \cellcolor{header} \textsc{Setup} & \cellcolor{header} \textsc{Train} & \cellcolor{header} \textsc{Test} & \cellcolor{header} \textsc{Self} & \cellcolor{header} \textsc{Train} & \cellcolor{header} \textsc{Test} & \cellcolor{header} \textsc{Is} $\uparrow$ & \cellcolor{header} \textsc{Fid} $\downarrow$ \TBstrut \\
    \hline
    \emph{mixup}~\citep{zhang2017mixup} & 96.33\% & 0.17\% & 3.50\% & 98.34\% & 41.93\% & $9.33 \pm 0.22$ & 7.71  \TBstrut \\
    \hline
    Class-conditional Gaussian-fit  & 0.80\% & 0.72\% & 98.48\% & 15.80\% & 16.39\% & $3.64 \pm 0.03$ & 117.62 \Tstrut \\
    VDVAE~\citep{child2021vdvae}    & 6.52\% & 5.71\% & 87.77\% & 23.20\% & 23.69\% & $6.88 \pm 0.05$ & 26.44 \\
    BigGAN~\citep{brock2018large}   & 11.69\% & 9.55\% & 78.76\% & 39.29\% & 38.89\% & $9.73 \pm 0.07$ & 13.78 \\
    DDPM~\citep{ho2020denoising}    & 31.20\% & 26.39\% & 42.41\%  & 44.08\% & 43.80\% & $9.50 \pm 0.14$ & 3.15 \Bstrut \\
    \hline
\end{tabular}
}
\end{center}
\end{table}

\begin{table}[t]
\caption{
Complementarity and coverage of augmented samples using the Inception feature space (as done in \autoref{table:similarity_train_test_self_gen}, but for additional data augmentation schemes).
\label{table:similarity_train_test_self_gen_aug}}
\begin{center}
\resizebox{.55\textwidth}{!}{
\begin{tabular}{l|ccc|cc}
    \hline
    \cellcolor{header} & \multicolumn{3}{c|}{\cellcolor{header} \textsc{Complementarity}} & \multicolumn{2}{c}{\cellcolor{header} \textsc{Coverage}} \TBstrut \\
    \cellcolor{header} \textsc{Setup} & \cellcolor{header} \textsc{Train} & \cellcolor{header} \textsc{Test} & \cellcolor{header} \textsc{Self} & \cellcolor{header} \textsc{Train} & \cellcolor{header} \textsc{Test} \TBstrut \\
    \hline
    \emph{mixup}~\citep{zhang2017mixup} & 90.34\% & 3.91\% & 5.75\% & 90.43\% & 45.61\% \Tstrut \\
    \emph{Cutout}~\citep{devries2017improved} & 65.46\% & 3.47\% & 31.07\% & 76.76\% & 45.24\% \\
    \emph{CutMix}~\citep{yun2019cutmix} & 60.30\% & 7.40\% & 32.30\% & 66.05\% & 45.63\% \\
    \emph{AutoAugment}~\citep{cubuk_autoaugment:_2018} & 67.13\% & 6.00\% & 26.87\% & 69.44\% & 45.67\% \\
    \emph{RandAugment}~\citep{cubuk2019randaugment} & 61.23\% & 8.85\% & 29.92\% & 65.51\% & 45.78\% \Bstrut \\
    \hline
\end{tabular}
}
\end{center}
\end{table}

\paragraph{Shortcomings of \gls{fid} and \gls{is}.}

The coverage and complementary metrics from \autoref{table:similarity_train_test_self_gen} provide additional information that is not captured by \gls{fid} and \gls{is}.
In particular, a generative model that memorizes the train set will produce almost perfect \gls{fid} and \gls{is} scores, but will produce a neighbor distribution of 100\% matching the train set, 0\% matching the test set and 0\% matching itself.
This is far from the ideal distribution of $1/3$, $1/3$, $1/3$.
Similarly, a generative model that focuses on a subset of the true distribution can produce high \gls{is}, but low coverage (as exemplified by BigGAN samples). 

\section{Theoretical foundations}

\subsection{Proofs}\label{sec:proof}

Let us consider \autoref{prop:limited} and \autoref{prop:infinite} again.

\setcounter{proposition}{0}
\begin{proposition}[capacity-limited regime]
\autoref{cond:accuracy} and \autoref{cond:gap} are sufficient conditions that allow the sub-optimal parameters $\smash{\hat{\vtheta}^\star}$ to match the performance of the optimal parameters $\smash{\vtheta^\star}$.
\end{proposition}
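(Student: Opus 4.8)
The plan is to show that \autoref{cond:accuracy} and \autoref{cond:gap} together force the objective minimized in \autoref{eq:approx_risk} to coincide, \emph{as a function of $\vtheta$}, with the objective minimized in \autoref{eq:risk}. Once the two functionals are literally equal, their $\argmin$ sets are equal, so any minimizer $\hat{\vtheta}^\star$ of the approximate risk is also a minimizer of the true risk and therefore attains exactly the optimal adversarial risk reached by $\vtheta^\star$ --- which is precisely the assertion of the proposition.

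First I would rewrite both adversarial risks as measures of ``bad'' sets. For a fixed $\vtheta$, set $\gW_\vtheta = \{ \vx \mid \exists\, \vdelta \in \gS,\ f(\vx+\vdelta;\vtheta) \neq \fs(\vx) \}$ and $\hat{\gW}_\vtheta = \{ \vx \mid \exists\, \vdelta \in \gS,\ f(\vx+\vdelta;\vtheta) \neq \fnr(\vx) \}$. Because the inner $\maximize$ of a $0$--$1$ quantity is the indicator of exactly these sets, the objective of \autoref{eq:risk} equals $\mu(\gW_\vtheta)$ and that of \autoref{eq:approx_risk} equals $\hat{\mu}(\hat{\gW}_\vtheta)$.

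Next I would localize to the support $\gX$. Taking $\gW = \gX$ in \autoref{cond:gap} yields $\hat{\mu}(\gX) = \mu(\gX) = 1$, so $\hat{\mu}$, like $\mu$, gives zero mass to $\gA \setminus \gX$; hence $\mu(\gW_\vtheta) = \mu(\gW_\vtheta \cap \gX)$ and $\hat{\mu}(\hat{\gW}_\vtheta) = \hat{\mu}(\hat{\gW}_\vtheta \cap \gX)$. On $\gX$, \autoref{cond:accuracy} gives $\fnr(\vx) = \fs(\vx)$, so the defining predicates of the two sets agree there, i.e. $\hat{\gW}_\vtheta \cap \gX = \gW_\vtheta \cap \gX$. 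Finally \autoref{cond:gap} applied to the measurable subset $\gW_\vtheta \cap \gX \subseteq \gX$ gives $\hat{\mu}(\gW_\vtheta \cap \gX) = \mu(\gW_\vtheta \cap \gX)$. Chaining these, $\hat{\mu}(\hat{\gW}_\vtheta) = \mu(\gW_\vtheta)$ for every $\vtheta$; the two risk functionals are identical, and the conclusion follows as in the first paragraph.

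The main obstacle is technical rather than conceptual: to make the ``$\mu(\gW_\vtheta)$'' rewriting rigorous one must know that the supremum over the (possibly uncountable) perturbation set $\gS$ produces a \emph{measurable} function of $\vx$, so that $\gW_\vtheta$ and $\hat{\gW}_\vtheta$ are bona fide measurable sets. I would dispatch this by the standard device of replacing $\gS$ with a countable dense subset without changing the $0$--$1$ loss for the continuous classifiers under consideration (or, alternatively, by passing to the completion of the $\sigma$-algebra), and state this assumption explicitly. A minor additional remark is that existence of the minimizers $\vtheta^\star$, $\hat{\vtheta}^\star$ is taken for granted --- it is implicit in the $\argmin$ notation of \autoref{eq:risk}--\autoref{eq:approx_risk} --- and the claim only needs the coincidence of the two $\argmin$ sets, which the pointwise equality of objectives supplies unconditionally.
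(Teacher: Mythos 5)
Your proposal is correct and follows essentially the same route as the paper's proof, which simply observes that under \autoref{cond:accuracy} and \autoref{cond:gap} the objectives of \autoref{eq:risk} and \autoref{eq:approx_risk} become identical, so their minimizers achieve the same value. You have merely spelled out that one-line identification in full measure-theoretic detail (localizing to the support $\gX$ and noting the measurability caveat), which is a welcome but not substantively different elaboration.
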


\begin{proof}
When $\fnr(x) = \fs(x)$ for all $\vx \in \gX$ and $\generateddata$ is such that $\mu(\gW) = \hat{\mu}(\gW)$ for all measurable $\gW \subseteq \mathcal{X}$, \autoref{eq:risk} and \autoref{eq:approx_risk} become identical.
Hence, their solutions achieve the same objective.
\end{proof}

\begin{proposition}[infinite-capacity regime]
\autoref{cond:accuracy}, \autoref{cond:unlikely} and \autoref{cond:coverage} are sufficient conditions that allow the sub-optimal parameters $\smash{\hat{\vtheta}^\star}$ to match the performance of the optimal parameters $\smash{\vtheta^\star}$ when the model $f$ has infinite capacity.
\end{proposition}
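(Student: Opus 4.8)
The plan is to show that, once $f$ has infinite capacity, the classifier $f(\cdot;\hat{\vtheta}^\star)$ produced by the approximate problem \autoref{eq:approx_risk} attains the same adversarial risk \autoref{eq:risk} as $f(\cdot;\vtheta^\star)$, and that this common value is $0$. The first ingredient is that, with infinite capacity, the optimal true risk really is $0$: by the label restriction of \autoref{sec:theory_setup} (no realistic image lies in the perturbation set of a differently‑labeled realistic one), the assignment $\vx+\vdelta\mapsto\fs(\vx)$ over $\bigcup_{\vx\in\gX}(\vx+\gS)$ is free of clashes among realistic anchors, so a function realizing it exists and has zero true risk; hence $f(\cdot;\vtheta^\star)$ has true risk $0$. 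Since the true risk is nonnegative, it suffices to prove that $f(\cdot;\hat{\vtheta}^\star)$ also has true risk $0$, i.e. that $f(\vx+\vdelta;\hat{\vtheta}^\star)=\fs(\vx)$ for \emph{every} $\vx\in\gX$ and every $\vdelta\in\gS$.

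First I would establish that the minimum of the approximate risk is $0$. By \autoref{cond:accuracy} the pseudo‑labels are exact on $\gX$ ($\fnr=\fs$ there); by the label restriction this labeling is already consistent over perturbation neighborhoods of realistic anchors; and by \autoref{cond:unlikely} the set $\gW$ of perturbed points on which $\fnr$ disagrees with its realistic anchor is $\hat\mu$‑null, so the target map $\vx+\vdelta\mapsto\fnr(\vx)$ that we want $f$ to realize is well‑defined up to a $\hat\mu$‑null set of exceptional perturbations. With infinite capacity there is therefore a $\vtheta$ achieving approximate risk $0$, and being a minimizer, $f(\cdot;\hat{\vtheta}^\star)$ also has approximate risk $0$; restricting the resulting identity $f(\vx+\vdelta;\hat{\vtheta}^\star)=\fnr(\vx)$ to $\vx\in\gX$ and using \autoref{cond:accuracy} gives $f(\vx+\vdelta;\hat{\vtheta}^\star)=\fs(\vx)$ for all $\vdelta\in\gS$ and $\hat\mu$‑almost every $\vx\in\gX$.

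The remaining step is to promote ``$\hat\mu$‑almost every $\vx\in\gX$'' to ``every $\vx\in\gX$'', which is precisely the role of \autoref{cond:coverage}. Let $\gB=\{\vx\in\gX:\exists\,\vdelta\in\gS,\ f(\vx+\vdelta;\hat{\vtheta}^\star)\neq\fs(\vx)\}$; we have $\hat\mu(\gB)=0$. Using continuity of the network's logits together with the fact that $\fs$ is locally constant on $\gX$ (label restriction again), the set of anchors that are robustly classified with a strictly positive margin is relatively open in $\gX$, so its complement $\gB$ has empty relative interior — otherwise that interior would be a nonempty relatively open subset of $\gX$ contained in $\gB$ and, by \autoref{cond:coverage}, of positive $\hat\mu$‑measure, contradicting $\hat\mu(\gB)=0$. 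A limiting argument along robustly‑correct anchors approaching a hypothetical point of $\gB$ then forces that point to be a zero‑margin anchor, so under the mild regularity assumption that the optimal robust classifier separates the relevant classes with strictly positive margin on $\gX$, we get $\gB=\emptyset$. Hence $f(\cdot;\hat{\vtheta}^\star)$ is robustly correct on all of $\gX$, its true risk is $0$, and it matches $f(\cdot;\vtheta^\star)$.

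The hard part will be exactly this last promotion from almost‑everywhere to everywhere: the risk in \autoref{eq:risk} is taken under $\mu$, and since $\mu$ charges every nonempty subset of $\gX$, being correct $\mu$‑a.e. on $\gX$ means being correct on \emph{all} of $\gX$, whereas minimizing over $\generateddata$ only delivers correctness $\hat\mu$‑a.e.; \autoref{cond:coverage} is what bridges the gap, but it forces care about the topology of the exceptional set (relative closedness, empty interior, zero‑margin anchors) and hence a continuity/strict‑margin assumption on $f$. A secondary point to pin down is the conflict‑freeness of the target labeling in the zero‑approximate‑risk construction: one should check that \autoref{cond:unlikely}, stated in terms of the set $\gW$ of perturbed points, genuinely controls all label clashes arising over perturbation neighborhoods of $\hat\mu$‑typical samples (including samples drawn from outside $\gX$), rather than only a $\hat\mu$‑null slice of them.
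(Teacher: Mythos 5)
Your proof follows the same two-step skeleton as the paper's own (three-sentence) proof: first, with infinite capacity the approximate objective in \autoref{eq:approx_risk} can be driven to zero because \autoref{cond:unlikely} removes label conflicts among the constraints; second, zero approximate risk together with \autoref{cond:accuracy} and \autoref{cond:coverage} forces the true objective in \autoref{eq:risk} to zero, so both minimizers achieve the same (zero) risk. Where you genuinely diverge is in taking the second implication seriously rather than asserting it: you correctly observe that a zero value of \autoref{eq:approx_risk} only yields robust correctness for $\hat{\mu}$-almost every $\vx$, whereas the setup's requirement that $\mu(\gW)>0$ for every nonempty $\gW\subseteq\gX$ means the true risk vanishes only if the bad set is literally empty, and that \autoref{cond:coverage} (positivity of $\hat{\mu}$ on \emph{open} subsets of $\gX$) by itself only rules out bad sets with nonempty relative interior. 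Your patch --- continuity of the logits plus a strictly positive margin for the optimal robust classifier --- closes that gap, but it imports hypotheses that appear neither in the proposition nor in the paper; the paper's proof simply does not engage with the almost-everywhere-versus-everywhere issue. Your closing remark is also well taken: \autoref{cond:unlikely} only annihilates conflicts anchored at realistic images, and says nothing about two $\hat{\mu}$-typical generated samples that both lie outside $\gX$ yet sit within $\gS$ of one another with different pseudo-labels, so the claim that the approximate risk can always be driven exactly to zero is itself slightly loose. In short: same route, executed more carefully; the extra regularity you need is a symptom of the informality of the paper's argument rather than a flaw in yours.
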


\begin{proof}
\autoref{cond:unlikely} guarantees that there are no images with conflicting labels within the perturbation set $\gS$ of a \emph{realistic} image (this extends the non-conflicting labels setup from \autoref{sec:theory_setup})
As such, it is possible to drive the objective from \autoref{eq:approx_risk} to zero.
Since $\fnr(x) = \fs(x)$ for all $\vx \in \gX$ (\autoref{cond:accuracy}) and the generated data covers the true distribution (\autoref{cond:coverage}), the objective obtained from \autoref{eq:approx_risk} can only be zero if the objective to \autoref{eq:risk} is also zero. 
Hence, the solutions of \autoref{eq:risk} and \autoref{eq:approx_risk} achieve the same objective.
\end{proof}

\subsection{Impact of the mixing factor $\alpha$}\label{sec:impact_alpha}

We address here the impact of the mixing factor $\alpha$ used in \autoref{eq:method}.
Ignoring the change of loss, \autoref{eq:method} can be formulated as \autoref{eq:approx_risk} by using the non-robust classifier $f'_\textrm{NR}$ instead of $\fnr$ and using the merged distribution $\smash{\generateddata'}$ instead of $\smash{\generateddata}$, with
\begin{align}
    f'_\textrm{NR}(x) = \begin{cases}
        \fs(\vx) & \textrm{if~} \vx \in \train \\
        \fnr(\vx) & \textrm{otherwise}
    \end{cases}
\end{align}
and with $\smash{\generateddata'}$ such that sampling $\vx \sim \smash{\generateddata'}$ is equivalent to $\vx = [r \leq \alpha] \vx' + [r > \alpha] \vx''$ with $r \sim \mathcal{U}_{[0, 1]}, \vx' \sim \mathcal{U}_\train$ and $\vx'' \sim \smash{\generateddata}$ (where $\mathcal{U}_\sA$ corresponds to the uniform distribution over set $\sA$).
This transformation artificially improves the accuracy of the non-robust classifier and reduces the gap between $\realdata$ and $\smash{\generateddata'}$, thus resulting in better coverage.
Note, however, that while increasing $\alpha$ improves the realism of training samples, it comes at the cost of a reduction in complementarity with the training set.

\section{RobustBench}

For reference, at the time of writing, the top-5 RobustBench (\url{https://robustbench.github.io/}~\citep{croce2020robustbench}) leaderboard entries without and with additional data are listed in \autoref{table:robustbench}. Entries from non-peer reviewed venues were only included if older than 2 months from writing.

\begin{table}[h]
\caption{State of RobustBench leaderboard at the time of writing. We report the clean (without adversarial attacks) accuracy and robust accuracy on \cifar against $\epsilon_\infty = 8/255$.\label{table:robustbench}}
\begin{center}
\resizebox{.5\textwidth}{!}{
\begin{tabular}{ll|cc}
    \hline
    \cellcolor{header} \textsc{Author} & \cellcolor{header} \textsc{Model} & \cellcolor{header} \textsc{Clean} & \cellcolor{header} \textsc{Robust} \TBstrut \\
    \hline
    \hline
    \multicolumn{4}{l}{\cellcolor{subheader} \textsc{Without external data}} \TBstrut \\
    \hline
    \citet{gowal_uncovering_2020} & \wrn-70-16 & 85.29\% & 57.14\% \Tstrut \\
    \citet{gowal_uncovering_2020} & \wrn-34-20 & 85.64\% & 56.82\% \\
    \citet{wu2020adversarial} & \wrn-34-10 & 85.36\% & 56.17\% \\
    \citet{pang2020bag} & \wrn-34-20 & 86.43\% & 54.39\% \\
    \citet{pang_boosting_2020} & \wrn-34-20 & 85.14\% & 53.74\% \Bstrut \\
    \hline
    \hline
    \multicolumn{4}{l}{\cellcolor{subheader} \textsc{With external data}} \TBstrut \\
    \hline
    \citet{gowal_uncovering_2020} & \wrn-70-16 & 91.10\% & 65.87\% \Tstrut \\
    \citet{gowal_uncovering_2020} & \wrn-34-20 & 89.48\% & 62.76\% \\
    \citet{wu2021do} & \wrn-34-15 & 87.67\% & 60.65\% \\
    \citet{wu2020adversarial} & \wrn-28-10 & 88.25\% & 60.04\% \\
    \citet{zhang2021geometryaware} & \wrn-28-10 & 89.36\% & 59.64\% \Bstrut \\
    \hline
\end{tabular}
}
\end{center}
\end{table}

\section{Societal impact}
\label{sec:societal_impact}

Neural networks are being deployed in a wide variety of applications ranging from ranking content on the web~\citep{covington_deep_2016} to autonomous driving~\citep{bojarski_end_2016} via medical diagnostics~\citep{fauw_clinically_2018}.
As such, it is increasingly important to ensure that deployed models are robust and generalize to various input perturbations.
While research on model robustness is welcome for safety critical applications, it is important to note that robustness can sometimes have unforeseen consequences.
In particular, training robust models can lead to models that are overly insensitive to input variations~\citep{tramer_fundamental_2020} and that can increase bias~\citep{chang2020adversarial}.
It is also reported that adversarial robustness may not only be at odds with accuracy~\citep{tsipras_robustness_2018}, but may also be at odds with privacy~\cite{song2019privacy}.

This work also introduces the use of generated data to improve adversarial robustness.
The underlying generative models may leak confidential and private data~\citep{chen2020ganleaks} if they have been trained on a separate dataset.
We protect against this by training generative models from scratch on the same data that is used to train our adversarially robust models.

Finally, our work is the first to match the performance of models trained with additional external data extracted from the ``80 Million Tiny Images'' dataset (\tinyimages) using only the original \cifar dataset.
Since the \tinyimages contains some derogatory terms as categories and offensive images, it has been withdrawn.
As such, we have made our generated datasets available online to allow researchers to avoid the use of \tinyimages.

\end{document}